\documentclass{article}

\usepackage{microtype}
\usepackage{multirow}
\usepackage{graphicx}
\usepackage{subcaption}
\usepackage{booktabs} 
\usepackage{algorithm}
\usepackage{float}
\usepackage{algorithmic}
\usepackage{hyperref}

\usepackage{tabularx}

\usepackage[accepted]{icml2026}

\usepackage{amsmath}
\usepackage{amssymb}
\usepackage{mathtools}
\usepackage{amsthm}

\usepackage[capitalize,noabbrev]{cleveref}

\theoremstyle{plain}
\newtheorem{theorem}{Theorem}[section]
\newtheorem{proposition}[theorem]{Proposition}
\newtheorem{lemma}[theorem]{Lemma}
\newtheorem{corollary}[theorem]{Corollary}
\theoremstyle{definition}
\newtheorem{definition}[theorem]{Definition}

\theoremstyle{remark}

\makeatletter
\g@addto@macro\@endtheorem{\vskip-0.1\baselineskip}
\makeatother

\usepackage[disable,textsize=tiny]{todonotes}

\icmltitlerunning{Path-Coupled Bellman Flows for Distributional Reinforcement Learning}

\begin{document}

\twocolumn[
  \icmltitle{Path-Coupled Bellman Flows for \\Distributional Reinforcement Learning}



  \begin{icmlauthorlist}
    \icmlauthor{Boyang Xu}{SCAI}
    \icmlauthor{Qing Zou}{SCAI}
    \icmlauthor{Siqin Yang}{SCAI}
    \icmlauthor{Hao Yan}{SCAI}
  \end{icmlauthorlist}

  \icmlaffiliation{SCAI}{School of Computing and Augmented Intelligence, Arizona State University, Tempe, AZ, USA}
  \icmlcorrespondingauthor{Hao Yan}{haoyan@asu.edu}

  \icmlkeywords{distributional reinforcement learning, Bellman operator, flow matching, generative modeling, offline reinforcement learning, variance reduction}

  \vskip 0.3in
]



\printAffiliationsAndNotice{}  

\begin{abstract}

Distributional reinforcement learning (DRL) models the full return distribution, but typically relies on finite-dimensional categorical or quantile approximations, often involving projection or quantile-regression approximations to the Bellman target, together with independently sampled bootstrap targets that obscure transport structure and add variance. We present Path-Coupled Bellman Flows (PCBF), a continuous-time DRL method that encodes Bellman endpoint consistency and pathwise Bellman-coupled geometry within generative flow trajectories. PCBF represents return distributions via flow matching and couples the paths of consecutive states through shared base noise, yielding a geometric Bellman flow relation between velocity fields. This structure enables a $\lambda$-parameterized control-variate target that reduces training variance while preserving the source and Bellman endpoint geometry. Experiments on analytically tractable MRPs, OGBench, and D4RL show improved distributional fidelity, training stability, and competitive offline RL performance.
 
\end{abstract}

\textbf{The code is available at:} {\url{https://github.com/BoyangASU/path-coupled-bellman-flows}}

\section{Introduction}
Distributional reinforcement learning (DRL) \cite{bellemare2017distributional} models the full distribution of returns rather than only their expectation, enabling richer representations of uncertainty and often leading to improved empirical performance. Despite this success, most practical DRL algorithms typically rely on finite-dimensional approximations over fixed supports \cite{bellemare2017distributional} or quantile assignments \cite{dabney2018distributional}. However, these discretization strategies introduce inherent limitations. Because the Bellman update rarely aligns with fixed support points or quantile locations, these methods rely on heuristic projection steps that introduce bias and limit the expressivity of the learned distribution.

To overcome the limitations of discrete projections, it is natural to reframe DRL as a problem of continuous probability transport. Fundamentally, the distributional Bellman equation defines an affine transport relationship: the return distribution at the current state is a direct transformation of the successor state's distribution. This perspective makes flow matching \cite{lipman2022flow} a highly attractive framework. By learning a continuous neural velocity field that transports samples from a simple base prior (e.g., Gaussian noise) to a complex target distribution, flow matching sidesteps heuristic projections and gracefully models the geometry of the Bellman update.

Yet, directly enforcing an uncorrected pointwise Bellman map inside flow composition fails in two critical ways. First, it violates the initial boundary condition: flow matching requires the generation process to start from a fixed simple prior, such as a standard Gaussian. However, an uncorrected Bellman update shifts this starting distribution through the reward and discounted successor value, making the standard flow-matching objective ill-posed. Second, when the noise driving the current and successor distributions is sampled independently, their intermediate trajectories are not pathwise aligned. As a result, Bellman consistency can only be enforced at the endpoint, leading to high-variance per-sample targets that destabilize critic learning.

In this work, we propose \textbf{Path-Coupled Bellman Flows (PCBF)}, which encodes Bellman endpoint consistency and pathwise Bellman-coupled geometry within generative flow trajectories while addressing both issues through a source-consistent Bellman path correction and shared-noise path coupling. Instead of applying the Bellman update directly to intermediate flow states, PCBF repairs the flow path so that it starts from the required base prior while still ending at the Bellman target. This separates the geometric requirement of flow matching from the stochasticity of Bellman bootstrapping. On top of this corrected geometry, PCBF couples current and successor return flows using shared base noise, aligning their intermediate trajectories rather than enforcing Bellman consistency only at the endpoint. This pathwise structure induces a $\lambda$-parameterized family of training targets that interpolates between direct sample-based Bellman supervision and variance-reduced supervision using successor-flow velocity predictions. Thus, PCBF cleanly separates Bellman path correction from variance control.

Theoretically, we characterize the population-optimal velocity field induced by the PCBF path, analyze the bias--variance behavior of the $\lambda$-target, and show that shared-noise Bellman generator updates inherit the standard $\gamma$-contraction, with an additional $t$-factor contraction along PCBF interpolants. Empirically, PCBF accurately recovers ground-truth return distributions on toy MRPs and achieves competitive performance on OGBench~\cite{ogbench_park2025} and D4RL Adroit~\cite{fu2020d4rl}.

\paragraph{Contributions.}
Our main contributions are:
\textbf{(i)}~a source-consistent Bellman-interpolated path that resolves the $t{=}0$ boundary mismatch of uncorrected pointwise Bellman paths while preserving the Bellman endpoint at $t{=}1$;
\textbf{(ii)}~a shared-noise path coupling that aligns current and successor return flows;
\textbf{(iii)}~a $\lambda$-parameterized control-variate target, with a distribution-free $L_2$ bias bound and a linear--Gaussian closed form showing why shared-noise coupling reduces intrinsic bias;
\textbf{(iv)}~a comprehensive empirical evaluation on analytically tractable return laws, OGBench, and D4RL.

\paragraph{Conflict of Interest Disclosure}
The authors declare that they have no financial or non-financial competing interests that could have influenced the work reported in this paper.

\section{Related Work}
\subsection{Generative Models in Offline RL}
Generative models have become prominent in offline RL for trajectory planning, behavior modeling, and policy extraction, including diffusion-based planners and policies~\cite{janner2022planning,wang2022diffusion,hansen2023idql} and flow- or energy-guided policy methods~\cite{chen2023score,lu2023contrastive,zhang2025energy}. These works primarily improve the actor or planner. PCBF instead focuses on the critic side: learning expressive and stable return distributions that can guide offline policy extraction.

\subsection{Distributional Reinforcement Learning}
To capture the full complexity of cumulative returns, the DRL paradigm shifts the focus from estimating expected scalar values to modeling the entire return distribution \cite{bellemare2017distributional}. Classical approaches initially relied on categorical projections \cite{bellemare2017distributional} and quantile regression \cite{dabney2018distributional} for discrete action spaces. This framework was later successfully scaled to continuous control domains by D4PG \cite{barth2018distributed}, which integrated categorical distributional critics with deterministic policy gradients, and was further advanced by continuous variants like DSAC \cite{duan2021distributional}. Despite their success, these traditional methods are inherently constrained by discrete projections, fixed support boundaries, or moment-matching approximations \cite{nguyen2021distributional}, fundamentally limiting continuous critic expressivity. To overcome these limits and match the continuous nature of modern generative actors, recent efforts have attempted to unify DRL with continuous flow matching.

\subsection{Flow Matching for Value Function Modeling}
\label{sec:flow-value-related}
Beyond policy and trajectory generation, generative models are increasingly applied to other RL components, especially value and return modeling.

\paragraph{Scalar value and reward flow models.}
EVOR~\cite{espinosa2025expressive} performs inference-time policy extraction using a distributional reward model learned via standard flow matching. Approaches such as $\text{flo}Q$~\cite{agrawalla2026floq} use flow matching to parameterize value functions, mapping noise to scalar value estimates to enable iterative compute scaling.

\paragraph{Temporal-difference flow matching.}
TD-Flow~\cite{farebrother2025temporal} combines temporal-difference bootstrapping with flow matching over probability paths. When applied to rewards, however, TD-Flow naturally corresponds to a return distribution for a transformed random-horizon MDP, where the process terminates at each step with probability $1-\gamma$. This differs from the conventional discounted return distribution usually studied in DRL: as discussed by \citet{bellemare2023distributional}, the two interpretations agree in expectation but generally differ as distributions. In contrast, PCBF is designed to model the standard discounted return distribution while preserving its Bellman geometry within flow matching.

\paragraph{Endpoint Bellman alignment.}
Concurrent methods such as DFC~\cite{chen2025unleashing} and Bellman Diffusion~\cite{li2024bellman} attempt to align Bellman target distributions but rely on independently sampled noise. Such endpoint-level matching lacks pathwise coupling, which can lead to misaligned vector fields and high-variance training targets.

\paragraph{Self-consistency and source boundaries.}
Value Flows~\cite{dong2025value} uses flow matching to model return distributions, combining endpoint supervision (BCFM) with a DCFM-style self-consistency term built from $r+\gamma z_t'$ at each $t$. A full-$t$ DCFM-style Bellman density constraint can tension with the Gaussian source at $t=0$, but the practical objective mitigates this via BCFM anchoring and weighting. We revisit this distinction, with an expanded comparison to PCBF, in Appendix~\ref{app:dcfm}.

\paragraph{Quantile-based coupling.}
FlowIQN~\citep{groom2026quantile} identifies a projection-metric mismatch in flow-matching distributional critics: independently paired source and Bellman-target samples yield a valid CFM regression objective, but not necessarily a Wasserstein-aligned Bellman projection. FlowIQN addresses this issue in one-dimensional return space by sorting source quantiles and Bellman-target samples within each minibatch, thereby approximating the monotone optimal-transport coupling and producing a Wasserstein-aligned projection objective. This direction is complementary to PCBF: FlowIQN focuses on the endpoint source--target coupling used to project Bellman target laws, whereas PCBF focuses on the time-continuous Bellman path itself---repairing the source boundary at $t=0$, coupling current and successor flows through shared noise, and using a $\lambda$-parameterized control variate to reduce successor-bootstrap variance.

Building on the unified flow-matching framework of \citet{lipman2024flow}, PCBF addresses these issues through source-consistent Bellman path repair and shared-noise path coupling. Together with its control-variate training target, PCBF mitigates high-variance Bellman flow-matching updates and provides a stable flow-based distributional critic for offline RL.

\section{Preliminaries}
\paragraph{Distributional RL.}
\label{sec:drl}
We consider a Markov decision process (MDP) \cite{sutton1998reinforcement} defined by the tuple $(\mathcal{S}, \mathcal{A}, p, \mathcal{R}, \gamma)$, with state space $\mathcal{S}$ and action space $\mathcal{A}$, transition kernel $p(s'\mid s,a)$, and discount $\gamma\in(0,1)$.
At each step we observe a reward sample $R$ drawn from a reward kernel $\mathcal{R}(\cdot\mid s,a,s')$ (deterministic rewards are the special case $R=r(s,a)$ a.s.).
Let $\pi(a\mid s)$ denote a stochastic policy. In distributional reinforcement learning (DRL), the return is modeled as a random variable $Z^{\pi}(s,a)$. The evolution of return distributions is governed by the \emph{distributional Bellman equation}

\begin{equation}
\label{bellman}
Z^\pi(s,a) \;\overset{d}{=}\; R(s,a) + \gamma\, Z^\pi(S',A'),
\end{equation}
where $(S',A')$ are random variables sampled conditional on $(s,a)$, and
$\overset{d}{=}$ denotes equality in distribution.

\paragraph{Flow matching.}
Flow matching \cite{lipman2022flow} is a class of continuous-time generative models that learns a time-dependent vector field to transport samples from a simple base distribution to a target data distribution. In contrast to denoising diffusion models \cite{ho2020denoising,song2020score}, which rely on stochastic differential equations (SDEs) and iterative noise injection, flow matching models are based on ordinary differential equations (ODEs), enabling simpler training and faster inference, while often achieving comparable or superior sample quality. 

Given a target distribution $p$ on $\mathbb{R}^d$, flow matching learns the parameters \(\theta\) of a time-dependent velocity field
$v_\theta$ such that the induced flow $\psi_\theta$, defined as the solution to the ordinary differential equation
\begin{equation}
\frac{d}{dt}\psi_\theta(t, x) = v_\theta(t, \psi_\theta(t, x))
\label{eq:fm_ode}
\end{equation}
satisfies the following boundary behavior: when initialized from a simple base distribution at \(t=0\) (e.g., a standard Gaussian), the resulting dynamics transport samples so that the distribution at \(t=1\) matches $p$.

In this work, we adopt flow matching based on linear interpolation paths and uniform time sampling \cite{lipman2022flow}. Specifically, given samples $X_0 \sim \mathcal{N}(0, I_d)$ and $X_1 \sim p(x_1)$, we define the linear interpolation
\begin{equation}
X_t = (1 - t) X_0 + t X_1.
\end{equation}
The flow matching objective minimizes the squared error between the learned velocity field and the ground-truth displacement direction
\begin{equation}
\label{eq:fm_loss}
\min_{\theta}\;
\mathbb{E}_{\substack{
X_0 \sim \mathcal{N}(0,I_d)\\
X_1 \sim p(x_1)\\
t \sim \mathrm{Unif}([0,1])
}}
\left[
\left\|
v_\theta(t,X_t) - (X_1 - X_0)
\right\|_2^2
\right].
\end{equation}
Intuitively, this objective trains the velocity field $v_\theta$ to predict the average direction that transports samples from the base distribution toward the data distribution along straight-line paths. At convergence, the resulting vector field defines an ODE whose solution generates samples from $p(x)$ when integrated from $t=0$ to $t=1$. At inference time, new samples are generated by solving the ODE in Equation ~\eqref{eq:fm_ode}. In this work, we use the explicit Euler method, which we find to be sufficient in practice. We refer readers to \citet{lipman2022flow} for further theoretical and practical details.

\section{Path-Coupled Bellman Flows}
\label{sec:method}
We now introduce \emph{Path-Coupled Bellman Flows}, a continuous-time distributional reinforcement learning framework that integrates flow matching with the recursive geometry of the distributional Bellman equation.
Our goal is to learn return distributions with flow matching while using \textbf{source-consistent Bellman-coupled paths}: the current path starts from the required base prior at $t{=}0$, reaches the Bellman target at $t{=}1$, and maintains a pathwise affine relation to the successor flow at intermediate times---without requiring every time-$t$ marginal to be a distributional Bellman fixed point. Figure~\ref{fig:pcbf-architecture} illustrates the key geometric distinction. PCBF couples the current and successor return flows through shared base noise while preserving the prescribed source distribution at $t=0$ and the Bellman endpoint at $t=1$. In contrast, directly imposing a Bellman relation at every intermediate time can shift the source away from the base prior, while uncoupled flow critics align only endpoint samples and do not enforce pathwise Bellman geometry.

\subsection{Boundary Mismatch of Pointwise Bellman Path}
\label{subsec:pointwise-dcfm}
Write the successor linear flow (shared base noise $X_0$, terminal return $X'$) as
\begin{equation}
Z'_t=(1-t)X_0+tX'.
\end{equation}
A natural attempt is to enforce the Bellman affine map \emph{pointwise at every flow time} by defining the \emph{uncorrected pointwise Bellman path}
\begin{equation}
\label{eq:pointwise_uncorrected_path}
Z^D_t \;\coloneqq\; R + \gamma\, Z'_t .
\end{equation}
This construction reaches the desired Bellman endpoint, $Z^D_1 = R + \gamma X'$, but it violates the source boundary required by flow matching: $Z^D_0 = R+\gamma X_0$, which is generally not equal to the prescribed base noise $X_0\sim\mathcal{N}(0,1)$. Thus $Z^D_t$ is not a valid flow-matching path for a model whose source distribution is fixed to the base prior.

This boundary failure also clarifies how prior Bellman flow-matching objectives connect to pointwise supervision.
In particular, the DCFM-style term in Value Flows~\cite{dong2025value} can be read as imposing intermediate-time consistency in velocity space: given an intermediate point $Z_t$, it evaluates the successor velocity at the Bellman-inverse point $(Z_t-R)/\gamma$ and aligns the two velocity fields.
The Value-Flows DCFM term can be viewed as a same-time self-consistency penalty evaluated at the Bellman-inverse point. Unlike the exact path derivative of $Z_t = R+\gamma Z'_t$, this objective does not include the explicit $\gamma$ velocity scaling; this distinction is one source of mismatch with the affine Bellman path geometry.
A literal full-$t$ DCFM-style Bellman density constraint inherits this source-boundary tension: a path satisfying $Z_t=R+\gamma Z_t'$ at every $t$ would reach the Bellman endpoint at $t{=}1$ but start from $R+\gamma X_0$ rather than $X_0$.
The practical Value Flows objective mitigates this tension by adding BCFM anchoring and uncertainty weighting.
Appendix~\ref{app:dcfm} revisits this issue at the distributional level and records formal statements (including a degeneracy analysis for unscaled DCFM).

\begin{lemma}[DCFM as Bellman-inverse same-time self-consistency]
\label{lem:dcfm_recovery}
The geometry-consistent DCFM objective evaluates the successor velocity at the Bellman-inverse point $Z'_t=(Z_t-R)/\gamma$ and penalizes same-time velocity mismatch with the affine scaling:
$\mathcal{L}_{\mathrm{DCFM}}(v,v_k)=\mathbb{E}\!\left[\left(v(t,Z_t\mid s,a)-\gamma\,v_k\!\left(t,\frac{Z_t-R}{\gamma}\,\middle|\, s',a'\right)\right)^2\right]$,
where $v_k$ denotes the lagged target-network velocity field.
This objective matches the path-derivative scaling implied by $Z_t=R+\gamma Z'_t$.
\end{lemma}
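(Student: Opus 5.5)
The lemma is essentially a computation, so the plan is to differentiate the affine Bellman path and read off the velocity relation it forces. First I fix the coupled paths. The successor path is $Z'_t=(1-t)X_0+tX'$, the current path is $Z_t=R+\gamma Z'_t$ (the uncorrected pointwise Bellman path of \eqref{eq:pointwise_uncorrected_path}), and $R$ and $X'$ do not depend on $t$. Since $\gamma\in(0,1)$, the affine map $z'\mapsto R+\gamma z'$ is invertible for each fixed $R$, with inverse $z\mapsto (z-R)/\gamma$. So on this path the successor state is recovered exactly as $Z'_t=(Z_t-R)/\gamma$. This identifies the Bellman-inverse point in the statement.

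Next I differentiate in $t$. This gives $\dot Z_t=\gamma\dot Z'_t=\gamma(X'-X_0)$, so the conditional velocity of the current path is $\gamma$ times the conditional velocity of the successor path at the same time and the same noise. I then pass to marginal velocity fields, using the standard flow-matching fact (Preliminaries) that the population-optimal field is the conditional expectation of the path derivative given the current point.

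\begin{itemize}
\item Suppose the successor field $v_k(t,\cdot\mid s',a')$ generates $Z'_t$ given $(s',a')$, so that $v_k(t,z'\mid s',a')=\mathbb{E}[X'-X_0\mid Z'_t=z',s',a']$.
\item Conditioning additionally on $R$, the $\sigma$-algebras generated by $Z_t$ and by $Z'_t$ coincide, because the map between them is a bijection.
\item Hence the field generating $Z_t$ (conditionally on $R,s',a'$) is $\gamma\,v_k\bigl(t,(z-R)/\gamma\mid s',a'\bigr)$. This is the pushforward (change-of-variables) rule for velocity fields under a time-independent affine map.
\end{itemize}

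Then I substitute this field as the regression target for $v(t,Z_t\mid s,a)$. The squared-error penalty between $v(t,Z_t\mid s,a)$ and $\gamma v_k(t,(Z_t-R)/\gamma\mid s',a')$, averaged over $(R,s',a',X_0,X',t)$, is exactly $\mathcal{L}_{\mathrm{DCFM}}$. It is a same-time penalty because both fields are evaluated at the same $t$. It has the affine scaling because the factor $\gamma$ comes directly from $\dot Z_t=\gamma\dot Z'_t$. This shows the objective matches the path-derivative scaling implied by $Z_t=R+\gamma Z'_t$. I would add a remark that dropping the $\gamma$ (the unscaled Value-Flows form) corresponds to matching $\dot Z'_t$ rather than $\dot Z_t$, which is the mismatch discussed in the text.

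The main obstacle is making the marginalization step rigorous. The current-state field $v(\cdot\mid s,a)$ is conditioned only on $(s,a)$, while the target depends on the random $(R,s',a')$. The step therefore holds pathwise and conditionally, not as an equality of fields. I would state the lemma at the level of the conditional regression target and invoke the tower property: the minimizer of $\mathcal{L}_{\mathrm{DCFM}}$ over $v$ is $\mathbb{E}\bigl[\gamma v_k(t,(Z_t-R)/\gamma\mid S',A')\mid Z_t,s,a\bigr]$. By the computation above, this equals the marginal velocity of the Bellman path $Z_t$. This needs only square-integrability of $v_k$ and of the return samples.
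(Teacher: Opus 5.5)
Your proposal is correct. Its first half is exactly the paper's justification. In the Remark on DCFM scaling in Appendix~\ref{app:dcfm}, the paper inverts $z'\mapsto R+\gamma z'$ to get the evaluation point $(Z_t-R)/\gamma$ and differentiates $Z_t=R+\gamma Z'_t$ to get $\dot Z_t=\gamma\dot Z'_t$. It treats the lemma as nothing more than this pathwise observation.

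Your marginalization step goes beyond the paper. Using the $L_2$ projection lemma and the tower property, you identify the population minimizer of $\mathcal{L}_{\mathrm{DCFM}}$, when $v_k$ is the exact successor field, with $\mathbb{E}[\gamma(X'-X_0)\mid Z_t,s,a]$. This is the marginal velocity of the pointwise Bellman path, i.e., the field obtained by regressing directly on $\dot Z_t$. It turns ``matches the path-derivative scaling'' from a statement about samples into one about fields. It also sharpens the paper's boundary point: the marginals transported by this field start at $\mathcal{L}(R+\gamma X_0\mid s,a)$ rather than $\mathcal{N}(0,1)$.

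One hypothesis in your second bullet should be stated explicitly. The equality $\sigma(Z_t,R)=\sigma(Z'_t,R)$ only gives $\mathbb{E}[X'-X_0\mid Z_t,R,S',A']=\mathbb{E}[X'-X_0\mid Z'_t,R,S',A']$. To drop $R$ from the conditioning, so that the right-hand side becomes $v_k(t,Z'_t\mid S',A')$, you need $R\perp(X_0,X')\mid(S',A')$. This does not follow from the bijection. It does hold in the paper's setup, by the Markov property and the independence of the base noise from the MDP randomness, with $X'=\psi^{\,1}_{\theta^-}(X_0\mid S',A')$ in the implementation.
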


Appendix~\ref{app:dcfm} expands on Value Flows (DCFM/BCFM objectives), explains the source-boundary tension under a literal full-$t$ DCFM constraint and how the practical objective mitigates it, and contrasts this with PCBF's source-consistent coupled-path construction.

\paragraph{Path-coupled Bellman flows.}
PCBF uses \emph{path coupling}: shared base noise ties the current and successor flows so that their velocities satisfy a Bellman-shaped geometric relation while respecting the flow source at $t{=}0$ and the Bellman endpoint at $t{=}1$.

Building on this coupling, PCBF derives a family of training
targets parameterized by $\lambda$.
Different choices of $\lambda$ interpolate between purely sample-based Bellman
supervision and variance-reduced targets that incorporate model-predicted
successor dynamics.
$\lambda=0$ is unbiased; $\lambda>0$ trades bias for variance reduction. With shared-noise coupling, the induced bias is small.

\begin{figure*}[t]
\centering
\includegraphics[width=2\columnwidth]{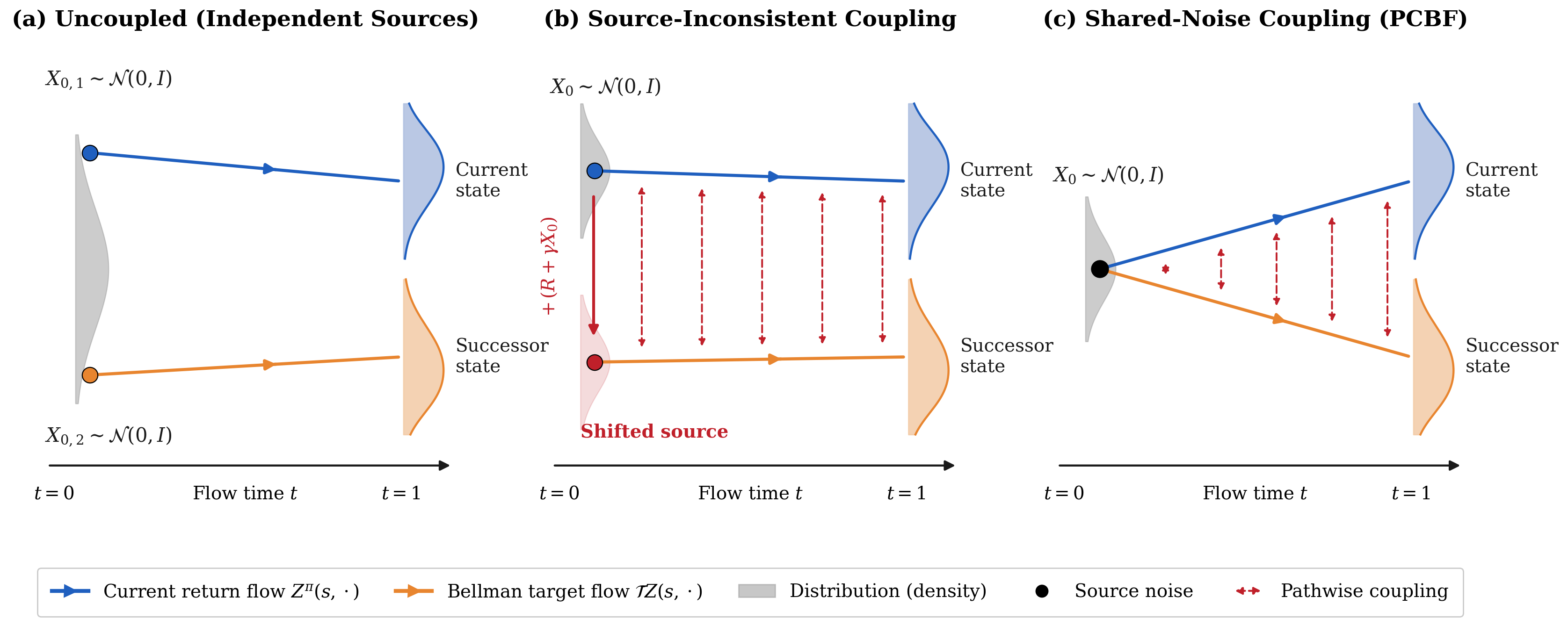}
\caption{\textbf{Path-coupled Bellman geometry.} Each panel shows a single current (blue) and successor (orange) return flow, differing only in how the per-path source noise is drawn. \textbf{(a)} Uncoupled: the two flows start from independent noise, so they are unrelated except in distribution. \textbf{(b)} Source-inconsistent: the flows are paired, but the successor starts from the shifted point $R+\gamma X_0$, violating the base prior at $t=0$. \textbf{(c)} PCBF: a single shared noise drives both flows, preserving the base prior at $t=0$ and the Bellman endpoint at $t=1$.}
\label{fig:pcbf-architecture}
\end{figure*}

\subsection{Bellman-Consistent Shared-Noise Paths}
\label{subsec:lambda_paths}
We begin by formalizing the pathwise structure implied by the distributional
Bellman equation~\eqref{bellman}, which relates the return distributions at
$(s,a)$ and $(s',a')$ through an affine transformation.
Rather than enforcing Bellman structure only at isolated endpoint samples, PCBF encodes a pathwise affine coupling between current and successor flows.

\paragraph{Flow-map notation.}
We write $\psi_\theta^{\,1}(x_0\mid s,a)$ for the solution at time $t{=}1$ of the ODE~\eqref{eq:fm_ode} with initial condition $x_0$ at $t{=}0$ and conditioning $(s,a)$; equivalently this is the previous ``$\psi_\theta(X_0\mid s,a,1)$'' notation.
For a base noise draw $X_0 \sim \mathcal{N}(0, I_d)$, the random variable $\psi_\theta^{\,1}(X_0\mid s,a)$ follows the learned return law $\hat Z_\theta(s,a)$.

To couple the current and successor flows, we generate the successor return using
the \emph{same} base noise and a slowly updated target network $v_{\theta^-}$, with flow map $\psi_{\theta^-}^{\,1}$:
\begin{equation}
\label{eq:succ-sample}
X' = \psi_{\theta^-}^{\,1}(X_0 \mid s',a').
\end{equation}
The theory permits any coupling between $X_0$ and $X_1'$ with the correct terminal marginal, while the implemented PCBF algorithm uses the deterministic target-flow coupling $X_1'=\psi_{\theta^-}^{\,1}(X_0\mid S',A')$.

We then define two time-synchronized linear interpolation paths as follows:
\begin{align}
Z^{s'}_t
&= (1-t)X_0 + t X',
&&\text{(successor path)}, \label{eq:succ-path} \\[4pt]
Z^{s}_t
&= (1-t)X_0 + t\bigl(R + \gamma X'\bigr),
&&\text{(current path)}. \label{eq:current-path}
\end{align}

Both paths originate from the same noise realization at $t=0$ and terminate at
Bellman-related samples at $t=1$.
An equivalent expression of Equation \eqref{eq:current-path} that explicitly reveals the Bellman geometry is:
\begin{equation}
\label{eq:lambda_path_alt}
Z^s_t
= t R + \gamma Z^{s'}_t + (1-t)(1-\gamma)X_0.
\end{equation}

The final term acts as a residual anchor that guarantees the exact alignment of both
paths at $t=0$, ensuring shared-noise initialization regardless of $\gamma$.
At $t=1$, the construction satisfies the distributional Bellman boundary
condition $Z^s_1 = R + \gamma X'$.

Differentiating \eqref{eq:lambda_path_alt} yields $\dot Z^s_t = R - (1-\gamma)X_0 + \gamma \dot Z^{s'}_t$. Substituting $v_{\theta^-}(t,Z^{s'}_t \mid s',a')$ for $\dot Z^{s'}_t$ gives the geometric target at $\lambda=\gamma$, revealing that $\lambda=0$ uses pure samples while $\lambda=\gamma$ eliminates $X'$ via velocity prediction.

Differentiating the current-state path in Equation \eqref{eq:current-path} yields the ideal Bellman-consistent velocity
\begin{equation}
\label{eq:bcfm-target}
\dot Z^s_t
= R + \gamma X' - X_0
\;\eqqcolon\; Y.
\end{equation}
This quantity corresponds to the BCFM target.
While unbiased, $Y$ depends directly on the sampled successor return $X'=x'$ and
thus suffers from high variance.

\subsection{$\lambda$-Parameterized Control Variates}
\label{subsec:lambda_target}
PCBF reduces this variance by exploiting information available along the
successor path.
Evaluating the target velocity field along $z^{s'}_t$ yields
\begin{equation}
c_t = v_{\theta^-}(t,Z^{s'}_t \mid s',a').
\end{equation}
For linear interpolation, the true successor-path velocity is constant and equal
to $X' - X_0$.
We therefore define a control variate
$C_t = c_t - (X' - X_0)$,
which captures the discrepancy between the model-predicted successor velocity
and the sample-based velocity.
For the population-optimal successor field $\bar v^\star$, the intrinsic piece of $C_t$ has mean zero conditional on the successor interpolant $Z^{s'}_t$ (see Section~\ref{sec:theory}); for learned $\bar v$ the residual remains random but acts as a control variate.

Using this control variate, we define the PCBF training target as follows:
\begin{equation}
\label{eq:lambda_target}
u_t^\lambda
\coloneqq
(R + \gamma X' - X_0)
+ \lambda\!\left[
v_{\theta^-}(t,Z^{s'}_t \mid s',a')
- (X' - X_0)
\right].
\end{equation}

Setting $\lambda=0$ recovers the baseline BCFM estimator (unbiased, high variance).
Nonzero $\lambda$ introduces a variance-reducing correction at the cost of potential bias.
Early in training, $\lambda \approx \gamma$ is often effective, as it replaces
the noisy successor sample with smoother velocity predictions.
As the target network improves, the correction becomes a more reliable control variate: its conditional mean under successor-path information approaches that of $\bar v^\star$ while still providing variance reduction.

\subsection{Policy Extraction for Offline Control}
\label{subsec:policy-extract}
At deployment, we extract actions using a candidate-action protocol based on mean terminal returns under the learned flow; details are given in Appendix~\ref{app:policy-extraction}.

\paragraph{Training procedure.}
Putting the above components together, PCBF trains a current velocity
network $v_\theta$ on a fixed offline dataset $\mathcal D$, together with a
slowly updated target velocity network $v_{\theta^-}$ maintained by Polyak
averaging. At each iteration, we sample a minibatch of transitions from
$\mathcal D$, draw shared base noises and flow times, generate successor
return samples using the target flow, and construct the path-coupled Bellman
interpolants. The $\lambda$-target is then formed as a control-variate
correction of the sample-based Bellman velocity target. Pseudocode is provided
in Appendix~\ref{app:pcbf-training-alg}.

\section{Theory: Bellman-Interpolated Flows and $\lambda$-Targets}
\label{sec:theory}

\subsection{Coupled Bellman Interpolants}

\paragraph{Setup.}
Fix a policy $\pi$ and a state--action pair $(s,a)$. Let $(S',R)\sim p(\cdot,\cdot\mid s,a)$ and $A'\sim \pi(\cdot\mid S')$.
Let $X_1' \sim \eta_{S',A'}$ denote the terminal next-return (under the true return law), and let $X_0\sim \mathcal{N}(0, 1)$ denote base noise, independent of the MDP randomness.

\begin{definition}[Coupled Bellman interpolants]
\label{def:coupled_interpolants}
For $t\in[0,1)$ define the successor and current interpolants
\begin{align}
X_t' &:= t\,X_1' + (1-t)\,X_0, \label{eq:theory_next_path}\\
X_t  &:= t\,(R+\gamma X_1') + (1-t)\,X_0. \label{eq:theory_curr_path}
\end{align}
\end{definition}

\subsection{Bellman-Interpolated Marginals and Posterior Operator}
For the independent-source Gaussian special case, define the implied noise (inverse map) for fixed $(x_1',r,t)$:
\begin{equation}
u(x,x_1',r,t):=\frac{x-t(r+\gamma x_1')}{1-t}.
\label{eq:implied_noise}
\end{equation}

\begin{definition}[Bellman-interpolated marginal law]
\label{def:P_density}
Let
\[
P_{s,a,t}:=\mathcal{L}\!\left((1-t)X_0+t(R+\gamma X_1')\mid S=s,A=a\right),
\]
for $t\in[0,1)$, under Definition~\ref{def:coupled_interpolants}, where $(X_0,X_1')$ may be coupled.
If $P_{s,a,t}$ is absolutely continuous, write its density as $P_{s,a}(x,t)$.
In the special case $X_0\perp (R,X_1')$ with $X_0\sim\mathcal N(0,1)$, this density has the explicit Gaussian-kernel form
\begin{equation}
P_{s,a}(x,t)
=
\mathbb{E}\!\left[
\frac{1}{1-t}\,
\phi\!\bigl(u(x,X_1',R,t)\bigr)
\;\middle|\; S=s,A=a
\right].
\label{eq:P_density_main}
\end{equation}
\end{definition}

\begin{proposition}[Endpoints of the Bellman-interpolated marginals]
\label{prop:endpoints_main}
Under Definition~\ref{def:coupled_interpolants} and \ref{def:P_density}:
\begin{enumerate}
\item At $t=0$: $X_0$ is the base noise, so $P_{s,a,0}=\mathcal L(X_0\mid S=s,A=a)=\mathcal N(0,1)$.
\item As $t \uparrow 1$: $X_t \Rightarrow R+\gamma X_1'$ in distribution, and $P_{s,a,t}\Rightarrow \mathcal{L}(R+\gamma X_1' \mid S=s,A=a)$ weakly. 
\end{enumerate}
Defining the endpoint law by
\[
P_{s,a,1}:=\mathcal L(R+\gamma X_1'\mid S=s,A=a),
\]
we have $P_{s,a,1}=(T^\pi \eta)_{s,a}$.
\end{proposition}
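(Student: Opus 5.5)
The plan is to verify the three claims in order directly from Definition~\ref{def:coupled_interpolants} and Definition~\ref{def:P_density}. Throughout, the coupling between $X_0$ and $(R,X_1')$ is arbitrary, subject only to the standing assumptions: $X_0\sim\mathcal N(0,1)$ is independent of the MDP randomness $(S,A)$, and $X_1'$ has the true successor return law $\eta_{S',A'}$.

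\textbf{Step 1 ($t=0$).} I will substitute $t=0$ into \eqref{eq:theory_curr_path}, which gives $X_t=X_0$ identically. Since $X_0$ is independent of $(S,A)$, the conditional law $\mathcal L(X_0\mid S=s,A=a)$ equals the unconditional law $\mathcal N(0,1)$. This holds for every coupling, because only the marginal of $X_0$ enters.

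\textbf{Step 2 ($t\uparrow1$).} I will argue pathwise. For each realization,
\[
X_t-(R+\gamma X_1')=(1-t)\bigl(X_0-(R+\gamma X_1')\bigr),
\]
which tends to $0$ surely as $t\uparrow1$. Hence $X_t\to R+\gamma X_1'$ almost surely, conditionally on $S=s,A=a$. Almost-sure convergence implies convergence in distribution, which is the same as weak convergence $P_{s,a,t}\Rightarrow\mathcal L(R+\gamma X_1'\mid S=s,A=a)$.

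To avoid measurability subtleties with conditioning, I will phrase this via bounded continuous test functions $f$. Bounded convergence gives $\mathbb E[f(X_t)\mid s,a]\to\mathbb E[f(R+\gamma X_1')\mid s,a]$, and the portmanteau theorem then gives weak convergence. For the Gaussian special case, I could alternatively check that the density \eqref{eq:P_density_main} is a Gaussian mollification, with bandwidth $1-t$, of the law of $t(R+\gamma X_1')$, which again converges weakly. The pathwise argument, however, covers all couplings at once.

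\textbf{Step 3 (identification with $T^\pi\eta$).} I will recall the definition of the distributional Bellman operator: $(T^\pi\eta)_{s,a}$ is the law of $R+\gamma G'$, where $(S',R)\sim p(\cdot,\cdot\mid s,a)$, $A'\sim\pi(\cdot\mid S')$, and $G'\sim\eta_{S',A'}$ is drawn conditionally independently given $(S',A')$. By the Setup, $X_1'$ has exactly this conditional law given $(S',A')$. The joint law of $(R,X_1')$ given $(s,a)$ therefore coincides with that of $(R,G')$, and pushing forward through $(r,x)\mapsto r+\gamma x$ gives $P_{s,a,1}=(T^\pi\eta)_{s,a}$.

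The only delicate point, and the main obstacle, is the conditional-independence requirement in Step 3. The coupling of $X_0$ with $X_1'$ (for instance $X_1'=\psi(X_0\mid S',A')$) must not alter the conditional law of $X_1'$ given $(S',A',R)$. I will state this explicitly as part of the setup: given $(S',A')$, $X_1'$ is independent of $R$ with law $\eta_{S',A'}$, and $X_0$ is independent of the MDP variables. Under this assumption the identification is immediate. Without it, the endpoint law may differ from $T^\pi\eta$ when rewards are stochastic.
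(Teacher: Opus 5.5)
Your proposal is correct and follows essentially the same argument as the paper. You substitute $t=0$ to get $X_t=X_0\sim\mathcal N(0,1)$, use the pathwise identity $X_t-(R+\gamma X_1')=(1-t)\bigl(X_0-(R+\gamma X_1')\bigr)\to 0$ to obtain convergence in distribution, and identify the endpoint law with $(T^\pi\eta)_{s,a}$ directly from the definition of the distributional Bellman operator. Your explicit requirement that $X_1'$ be conditionally independent of $R$ given $(S',A')$ is something the paper leaves implicit; a deterministic-flow coupling $X_1'=\psi(X_0\mid S',A')$ satisfies it because $X_0$ is independent of the MDP variables, so stating it sharpens the argument without changing the route.
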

\begin{proof}
See Appendix~\ref{app:proofs}.
\end{proof}

\begin{definition}[Posterior operator]
\label{def:posterior_operator_main}
For any integrable random variable $g=g(S',A',R,X_1',X_0,t)$ define
\begin{equation}
\mathcal{B}_{s,a}[g](x,t)
:=
\mathbb{E}\bigl[g \mid X_t=x,\; S=s,\; A=a\bigr].
\label{eq:B_def_main}
\end{equation}
\end{definition}

In the independent-source Gaussian case, $\mathcal B_{s,a}$ admits the usual Bayes form obtained by weighting samples with the Gaussian kernel in Eq.~\eqref{eq:P_density_main}; details are in Appendix~\ref{app:proofs}.

\subsection{Posterior Velocity Identity}
\begin{theorem}[Continuity equation and posterior velocity]
\label{thm:posterior_velocity_main}
For each $(s,a)$ and $t\in[0,1)$, the pathwise velocity is $(R+\gamma X_1')-X_0$, and the population velocity is
\[
v^\star_{s,a}(x,t)=\mathbb E\!\left[(R+\gamma X_1')-X_0 \mid X_t=x,\;S=s,\;A=a\right].
\]
When $P_{s,a,t}$ has density $P_{s,a}(x,t)$, it satisfies the continuity equation
\[
\partial_t P_{s,a}(x,t)+\partial_x\!\bigl(P_{s,a}(x,t)\,v^\star_{s,a}(x,t)\bigr)=0,
\]
and equivalently
\begin{equation}
v^\star_{s,a}(x,t)=\mathcal{B}_{s,a}\!\left[(R+\gamma X_1')-X_0\right](x,t).
\label{eq:vstar_main}
\end{equation}
\end{theorem}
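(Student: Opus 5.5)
We follow the standard conditional flow matching argument (Lipman et al.), applied to the Bellman-interpolated path of Definition~\ref{def:coupled_interpolants} with $(s,a)$ fixed throughout; all expectations below are conditional on $S=s,A=a$.

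\textbf{Step 1: pathwise velocity.} Differentiating \eqref{eq:theory_curr_path} in $t$ for fixed $(R,X_1',X_0)$ gives $\dot X_t=(R+\gamma X_1')-X_0=:Y$. This is constant in $t$.

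\textbf{Step 2: weak continuity equation.} Take a test function $f\in C_c^\infty(\mathbb R)$. By dominated convergence, which needs $\mathbb E|Y|<\infty$ (true under integrable rewards and returns),
\[
\frac{d}{dt}\mathbb E[f(X_t)]=\mathbb E[f'(X_t)\,Y].
\]
Condition on $X_t$ via the tower property. Writing $v^\star_{s,a}(x,t):=\mathbb E[Y\mid X_t=x,S=s,A=a]$, we get
\[
\mathbb E[f'(X_t)Y]=\mathbb E\bigl[f'(X_t)\,v^\star_{s,a}(X_t,t)\bigr]=\int f'(x)\,v^\star_{s,a}(x,t)\,P_{s,a}(x,t)\,dx.
\]
The left side is $\frac{d}{dt}\int f\,P_{s,a}(x,t)\,dx$. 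Integrating by parts then yields
\[
\partial_t P_{s,a}+\partial_x\bigl(P_{s,a}\,v^\star_{s,a}\bigr)=0
\]
in the distributional sense. When the density is smooth enough, this holds pointwise.

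\textbf{Step 3: identity \eqref{eq:vstar_main}.} The identity is immediate from Definition~\ref{def:posterior_operator_main}, since $v^\star_{s,a}=\mathcal B_{s,a}[Y]$.

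\textbf{Step 4: population minimizer.} To justify calling $v^\star$ the population velocity, we show it minimizes the regression loss
\[
v\;\mapsto\;\mathbb E\,|v(t,X_t)-Y|^2.
\]
Expand around the conditional mean:
\[
\mathbb E|v-Y|^2=\mathbb E|v-v^\star|^2+\mathbb E|v^\star-Y|^2.
\]
The cross term vanishes by the tower property. Hence $v^\star$ is the $L^2$-optimal field, and it is unique a.e.\ with respect to $P_{s,a,t}$.

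\textbf{Main obstacle: rigor at time $t$.} Two points need care.

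First, the conditional expectation must be realized as a measurable function of $x$. This uses a regular conditional distribution, which exists on Polish spaces.

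Second, we must justify differentiating under the expectation and integrating by parts. In the independent Gaussian case, \eqref{eq:P_density_main} makes $P_{s,a}(\cdot,t)$ smooth and strictly positive for $t<1$. Then $v^\star$ has the explicit Bayes form
\[
v^\star_{s,a}(x,t)=\frac{\mathbb E\bigl[Y\,\phi(u)/(1-t)\bigr]}{P_{s,a}(x,t)},
\]
and the continuity equation can be verified classically.

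For general couplings we state the equation only weakly, which is all the statement requires when the density exists. We restrict to $t<1$ so that the Gaussian kernel is nondegenerate.
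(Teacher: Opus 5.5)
Your proposal is correct and matches the paper's proof. Like the paper, you differentiate the interpolant to get the constant pathwise velocity, read off \eqref{eq:vstar_main} directly from the definition of $\mathcal B_{s,a}$, and derive the continuity equation weakly from $\frac{d}{dt}\mathbb E[f(X_t)]=\mathbb E[f'(X_t)\,Y]$ followed by conditioning on $X_t$ (the paper's ``disintegration''). Your Step~4, which tacitly needs $\mathbb E|Y|^2<\infty$, is the same $L_2$-projection argument the paper uses separately for Proposition~\ref{prop:lambda_learns_main}, so it is a harmless addition rather than a different route.
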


\paragraph{Proof}
See Appendix~\ref{app:proofs}.

\paragraph{Connection to flow matching.}
Theorem~\ref{thm:posterior_velocity_main} identifies the population-optimal velocity field transporting the Bellman-interpolated marginals. PCBF's training objective is precisely a Monte Carlo regression estimator of this conditional expectation; the following analysis shows how the $\lambda$-target changes the estimator's bias/variance.

\subsection{What the $\lambda$-Target Learns}
Let $\bar v(\cdot,t,S',A')$ be any (possibly lagged) successor velocity field and define the control variate residual
\[
C_{\bar v}:=\bar v(X_t',t,S',A')-(X_1'-X_0).
\]
Define the per-sample $\lambda$-target
\begin{equation}
v_\lambda := (R+\gamma X_1')-X_0 + \lambda\,C_{\bar v}.
\label{eq:v_lambda_main}
\end{equation}

\begin{proposition}[$L_2$ regression target and bias decomposition]
\label{prop:lambda_learns_main}
Fix $(s,a,t)$. The population minimizer of $\mathbb{E}[(f(X_t)-v_\lambda)^2\mid S=s,A=a]$ satisfies
\[
f^\star(x)=\mathbb{E}[v_\lambda\mid X_t=x,S=s,A=a]=\mathcal{B}_{s,a}[v_\lambda](x,t).
\]
Consequently the learned marginal field equals
\begin{equation}
v^{(\lambda)}_{s,a}(x,t)
=
v^\star_{s,a}(x,t)+\lambda\,\mathcal{B}_{s,a}[C_{\bar v}](x,t).
\label{eq:lambda_bias_main}
\end{equation}
\end{proposition}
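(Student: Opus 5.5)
The argument has two parts: the usual $L_2$-projection characterization of conditional expectation, and then linearity of the posterior operator $\mathcal{B}_{s,a}$ together with Theorem~\ref{thm:posterior_velocity_main}.

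\textbf{Step 1 (population minimizer).} Fix $(s,a,t)$ and work under the conditional law given $S=s,A=a$. We assume $v_\lambda$ is square-integrable, which holds if $R$, $X_1'$, $X_0$ and $\bar v(X_t',t,S',A')$ have finite second moments. Set $m(x):=\mathbb{E}[v_\lambda\mid X_t=x,S=s,A=a]$. For any measurable $f$ with $f(X_t)\in L_2$, expand
\[
\mathbb{E}\bigl[(f(X_t)-v_\lambda)^2\bigr]=\mathbb{E}\bigl[(f(X_t)-m(X_t))^2\bigr]+\mathbb{E}\bigl[(m(X_t)-v_\lambda)^2\bigr].
\]
The cross term $2\,\mathbb{E}[(f(X_t)-m(X_t))(m(X_t)-v_\lambda)]$ vanishes by the tower property. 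We condition on $X_t$ and use that $f(X_t)-m(X_t)$ is $X_t$-measurable, while $\mathbb{E}[m(X_t)-v_\lambda\mid X_t]=0$. Hence the minimizer is $f^\star=m$, unique up to $P_{s,a,t}$-null sets. By Definition~\ref{def:posterior_operator_main}, $m(x)=\mathcal{B}_{s,a}[v_\lambda](x,t)$. Note that $g=v_\lambda$ is an admissible argument of $\mathcal{B}_{s,a}$: it depends on $(S',A',R,X_1',X_0,t)$, since $X_t'$ is a function of $(X_1',X_0,t)$.

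\textbf{Step 2 (decomposition).} Conditional expectation is linear, so $\mathcal{B}_{s,a}$ is linear on integrable arguments. Using \eqref{eq:v_lambda_main},
\[
\mathcal{B}_{s,a}[v_\lambda](x,t)=\mathcal{B}_{s,a}\bigl[(R+\gamma X_1')-X_0\bigr](x,t)+\lambda\,\mathcal{B}_{s,a}[C_{\bar v}](x,t).
\]
By \eqref{eq:vstar_main} in Theorem~\ref{thm:posterior_velocity_main}, the first term equals $v^\star_{s,a}(x,t)$, which gives \eqref{eq:lambda_bias_main}. Since $\lambda$ is deterministic, it factors out of the conditional expectation.

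\textbf{Main obstacle.} The only delicate point is measure-theoretic. Conditional expectations are defined only $P_{s,a,t}$-a.e., so both the minimizer and the identity \eqref{eq:lambda_bias_main} hold almost everywhere. This is sufficient because the regression loss only sees $X_t$ through its law. Integrability of $C_{\bar v}$ must also be assumed; otherwise $\mathcal{B}_{s,a}[C_{\bar v}]$ may not be defined. No density or Gaussian structure is needed, so the argument covers arbitrary couplings of $(X_0,X_1')$, including the deterministic target-flow coupling.
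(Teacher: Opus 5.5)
Your proof is correct and takes essentially the same approach as the paper. The paper also applies the $L_2$ projection lemma with $X=X_t$, $Y=v_\lambda$, and then expands $v_\lambda$ using linearity and Theorem~\ref{thm:posterior_velocity_main}; your explicit cross-term computation and the a.e.\ and integrability caveats make precise what the paper leaves implicit.
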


For $\bar v=\bar v^\star$, the control variate $C_{\bar v}$ has mean zero conditional on $(X_t',S',A')$; however the PCBF regression conditions on the \emph{current} interpolant $X_t$, which generally introduces the bias term in \eqref{eq:lambda_bias_main}.
The $\lambda$-scheme matches the population Bellman-interpolant field $v^\star_{s,a}$ \emph{iff} $\mathcal{B}_{s,a}[C_{\bar v}](x,t)=0$ holds $P_{s,a}(\cdot,t)$-a.e.



\subsection{Bias of the $\lambda$-Target}
\label{subsec:bias_bound}

The $\lambda$-target introduces a control-variate correction using the successor velocity field.
This correction reduces variance but can introduce bias because $C_{\bar v}$ is mean-zero when conditioning on the \emph{successor} interpolant $X_t'$, whereas the regression conditions on the \emph{current} interpolant $X_t$.
The following bound holds without Gaussianity or deterministic rewards; its proof is in Appendix~\ref{app:proof-bias-bound}.

\begin{proposition}[Distribution-free $L_2$ bias bound]
\label{prop:bias_bound_main}
Fix $(s,a,t)$ and let $\bar v$ be any successor velocity field.
Let $\varepsilon_{\mathrm{app}}(s,a,t)$ be the RMS error of $\bar v$ relative to the population-optimal successor velocity $\bar v^\star$, and let $\varepsilon_{\mathrm{int}}(s,a,t)$ be the intrinsic cross-conditioning bias (both defined precisely in Appendix~\ref{app:proof-bias-bound}).
Then
\begin{equation}
\begin{aligned}
&\left\|
v^{(\lambda)}_{s,a}(\cdot,t)-v^\star_{s,a}(\cdot,t)
\right\|_{L_2(P_{s,a,t})}\\
&\qquad\le
|\lambda|
\bigl(
\varepsilon_{\mathrm{app}}(s,a,t)
+
\varepsilon_{\mathrm{int}}(s,a,t)
\bigr).
\end{aligned}
\label{eq:integrated_l2_lambda_bias}
\end{equation}
\end{proposition}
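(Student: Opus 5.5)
Starting from Proposition~\ref{prop:lambda_learns_main}, which gives $v^{(\lambda)}_{s,a}(x,t)-v^\star_{s,a}(x,t)=\lambda\,\mathcal B_{s,a}[C_{\bar v}](x,t)$, the quantity to bound is $|\lambda|\,\|\mathcal B_{s,a}[C_{\bar v}](\cdot,t)\|_{L_2(P_{s,a,t})}$. The plan is to split the control variate residual through the population-optimal successor field:
\[
C_{\bar v}=\bigl(\bar v(X_t',t,S',A')-\bar v^\star(X_t',t,S',A')\bigr)+\bigl(\bar v^\star(X_t',t,S',A')-(X_1'-X_0)\bigr)=:C_{\mathrm{app}}+C^\star .
\]
Since $\mathcal B_{s,a}$ is a conditional expectation, it is linear. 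The triangle inequality in $L_2(P_{s,a,t})$ then reduces the claim to bounding $\|\mathcal B_{s,a}[C_{\mathrm{app}}]\|$ and $\|\mathcal B_{s,a}[C^\star]\|$ separately.

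For the approximation term, I would use that conditional expectation is an $L_2$ contraction. By conditional Jensen,
\[
\mathbb E\bigl[(\mathbb E[C_{\mathrm{app}}\mid X_t,s,a])^2\bigr]\le \mathbb E[C_{\mathrm{app}}^2\mid s,a],
\]
and the right side is the squared RMS discrepancy between $\bar v$ and $\bar v^\star$. This is measured along the successor interpolant $X_t'$, under the joint law of $(S',A',X_t')$ induced from $(s,a)$. I will take this as the definition of $\varepsilon_{\mathrm{app}}(s,a,t)$, so that $\|\mathcal B_{s,a}[C_{\mathrm{app}}]\|_{L_2(P_{s,a,t})}\le\varepsilon_{\mathrm{app}}(s,a,t)$. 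The $L_2(P_{s,a,t})$ norm of a function of $X_t$ is exactly the $L_2$ norm of that random variable given $(s,a)$, because $P_{s,a,t}$ is the law of $X_t$. This needs no Gaussianity or deterministic rewards, only square-integrability.

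For the intrinsic term I will define $\varepsilon_{\mathrm{int}}(s,a,t):=\|\mathcal B_{s,a}[C^\star](\cdot,t)\|_{L_2(P_{s,a,t})}$. The bound is then immediate, and the two pieces combine to give \eqref{eq:integrated_l2_lambda_bias}. I expect the main obstacle to be making this definition meaningful rather than tautological. Since $\bar v^\star$ is the conditional mean of $X_1'-X_0$ given $(X_t',S',A')$, the residual $C^\star$ is mean-zero given the successor information. It need not be mean-zero given the current interpolant $X_t$, because $X_t$ is not measurable with respect to $(X_t',S',A')$: it also depends on $R$, and it scales $X_1'$ by $\gamma$.

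To address this, I would remark that $\varepsilon_{\mathrm{int}}$ vanishes whenever $\sigma(X_t,S,A)\subseteq\sigma(X_t',S',A')$, for example with deterministic $R$ and $\gamma=1$. I would also remark that, by the tower property, $\varepsilon_{\mathrm{int}}^2\le \mathbb E[\mathrm{Var}(X_1'-X_0\mid X_t',S',A')]$, so it is finite. This keeps the statement distribution-free. The sharper closed-form estimate of $\varepsilon_{\mathrm{int}}$ is deferred to the linear--Gaussian analysis.
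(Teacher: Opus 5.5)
Your proposal is correct and follows the paper's proof essentially step for step. It uses the same split $C_{\bar v}=E_{\mathrm{app}}+E_{\mathrm{int}}$ through $\bar v^\star$, the triangle inequality in $L_2(P_{s,a,t})$, and conditional Jensen with the tower property for the approximation term, together with the same definitions of $\varepsilon_{\mathrm{app}}$ and $\varepsilon_{\mathrm{int}}:=\|\mathcal B_{s,a}[E_{\mathrm{int}}](\cdot,t)\|_{L_2(P_{s,a,t})}$. Your finiteness remark $\varepsilon_{\mathrm{int}}^2\le\mathbb E[\mathrm{Var}(X_1'-X_0\mid X_t',S',A')\mid S=s,A=a]$ is exactly the paper's auxiliary RMS bound on the intrinsic term, since $E_{\mathrm{int}}$ is the centered residual of $X_1'-X_0$ given the successor information.
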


\paragraph{Gaussian closed form and shared-noise intuition.}
To interpret $\varepsilon_{\mathrm{int}}$, consider a one-step linear--Gaussian MRP with deterministic reward $r$, successor return $X_1'\sim\mathcal{N}(\mu,\sigma^2)$, and base-noise correlation $\rho=\mathrm{Corr}(X_0,X_0')$.
For the population-optimal successor field, the intrinsic residual $C:=\bar v^\star(Z_t',t)-(Z_1'-X_0')$ satisfies
\[
\mathbb{E}[C\mid Z_t=x]
=
\kappa(t,\gamma,\sigma,\rho)
\bigl(x-t(r+\gamma\mu)\bigr),
\]
\[
\kappa(t,\gamma,\sigma,\rho)
=
\frac{t(1-t)\sigma^2(\rho-\gamma)}
{
\bigl(t^2\sigma^2+(1-t)^2\bigr)
\bigl((\gamma t)^2\sigma^2+(1-t)^2\bigr)
}.
\]
Shared-noise coupling ($\rho=1$) gives $\kappa=O\bigl((1-\gamma)(1-t)\bigr)$, so bias vanishes near $\gamma=1$ and near the flow endpoints; independent paths ($\rho=0$) retain an $O(\gamma)$ factor at fixed interior $t$.
The variance-minimizing coefficient is $\lambda^\star(t)=\gamma(1-t)+\rho t$.
Full derivation is in Appendix~\ref{app:gaussian_bias}.

\subsection{Shared-Noise Bellman Contraction}
Let $p\ge 1$. Let $G=\{G_{s,a}\}$ and $H=\{H_{s,a}\}$ be return generators driven by a shared latent seed $\xi$, and define
\[
D_p(G,H):=
\sup_{s,a}
\left(
\mathbb E_{\xi}
\left[
|G_{s,a}(\xi)-H_{s,a}(\xi)|^p
\right]
\right)^{1/p}.
\]
For a fixed policy $\pi$, define the shared-noise Bellman generator update by
\[
(\mathcal T_{\rm pc}G)_{s,a}
=
R+\gamma G_{S',A'}(\xi'),
\quad (S',A')\sim p(\cdot\mid s,a)\pi(\cdot\mid S').
\]
\begin{proposition}[Shared-noise Bellman contraction]
\label{prop:shared_noise_contraction}
Under the common coupling where the same transition, action, reward, and successor latent seed $\xi'$ are used when comparing $\mathcal T_{\rm pc}G$ and $\mathcal T_{\rm pc}H$,
\[
D_p(\mathcal T_{\rm pc}G,\mathcal T_{\rm pc}H)
\le
\gamma D_p(G,H).
\]
Moreover, for PCBF interpolants \(X_t^\Phi=(1-t)X_0+t(R+\gamma \Phi_{S',A'}(\xi'))\) with \(\Phi\in\{G,H\}\),
we have
\[
\sup_{s,a}\left(\mathbb E |X_t^G-X_t^H|^p\right)^{1/p}
\le
t\gamma D_p(G,H).
\]
\end{proposition}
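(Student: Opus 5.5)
The plan is to work pathwise under the common coupling and then take $L_p$ norms. Fix $(s,a)$. Under the stated coupling, the random quantities $(S',A',R,\xi')$ are the same whether we build $\mathcal T_{\rm pc}G$ or $\mathcal T_{\rm pc}H$. Subtracting the two updates therefore cancels the reward exactly and leaves
\[
(\mathcal T_{\rm pc}G)_{s,a}-(\mathcal T_{\rm pc}H)_{s,a}=\gamma\bigl(G_{S',A'}(\xi')-H_{S',A'}(\xi')\bigr).
\]
Everything reduces to bounding the $p$-th moment of this difference.

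First, take $p$-th powers and expectations, and condition on $(S',A')$ (the tower property). The successor seed $\xi'$ is drawn from the same latent law as $\xi$ and, as I will state explicitly, independently of the transition and action draws. Then
\[
\mathbb E\bigl[|G_{S',A'}(\xi')-H_{S',A'}(\xi')|^p\mid S',A'\bigr]=\mathbb E_{\xi}\bigl[|G_{S',A'}(\xi)-H_{S',A'}(\xi)|^p\bigr]\le D_p(G,H)^p .
\]
The inequality holds because $D_p$ takes a supremum over all state--action pairs. Integrating over $(S',A')$ preserves this bound, so $\mathbb E|(\mathcal T_{\rm pc}G)_{s,a}-(\mathcal T_{\rm pc}H)_{s,a}|^p\le\gamma^pD_p(G,H)^p$. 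Taking $p$-th roots and then the supremum over $(s,a)$ gives the first claim.

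For the interpolant claim, I also share the base noise $X_0$ between the two paths, so that $X_0$ is identical for $X_t^G$ and $X_t^H$. The terms $(1-t)X_0$ and $tR$ then cancel, leaving
\[
X_t^G-X_t^H=t\gamma\bigl(G_{S',A'}(\xi')-H_{S',A'}(\xi')\bigr).
\]
The same conditioning argument bounds its $L_p$ norm by $t\gamma D_p(G,H)$, uniformly in $(s,a)$.

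The only delicate point is the measurability and independence step. It requires that $\xi'$ have the latent law used in the definition of $D_p$ and be independent of $(S',A')$, or at least that its conditional law given $(S',A')$ is that law. I would state this as part of the coupling assumption. If instead $\xi'$ is allowed to equal $X_0$ (the implemented shared-noise case), the bound still holds because $X_0$ is independent of the MDP randomness. Beyond this, the argument uses only the tower property and monotonicity of the supremum; it does not need Minkowski's inequality, since the difference is a single term.
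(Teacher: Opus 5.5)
Your proof is correct and takes essentially the same route as the paper. Under the common coupling the reward cancels (and for the interpolants so does the shared $(1-t)X_0$ term), leaving $\gamma$ (resp.\ $t\gamma$) times $G_{S',A'}(\xi')-H_{S',A'}(\xi')$, whose $L_p$ norm is at most $D_p(G,H)$. Your explicit conditioning on $(S',A')$, with $\xi'$ independent of the transition and carrying the latent law used in $D_p$, spells out a step the paper leaves implicit when it writes $\bigl(\mathbb E|G_{S',A'}(\xi')-H_{S',A'}(\xi')|^p\bigr)^{1/p}\le D_p(G,H)$.
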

Proof in Appendix~\ref{app:proof-shared-noise-contraction}.
This is a contraction for the shared coupling used by PCBF. It does not claim contraction of arbitrary independently coupled samples, nor does it by itself give a finite-sample neural optimization guarantee.
PCBF can be viewed as a synchronous-coupling method: current and successor return flows are driven by shared latent noise realization, so Bellman comparisons are performed pathwise on aligned trajectories rather than between independently sampled return realizations.
This synchronization underlies PCBF's variance reduction, contraction behavior, and improved interpolation stability; the additional $t$-factor indicates that discrepancies vanish near the source distribution and grow gradually over flow time.

\section{Experiments}
\label{sec:experiments}
In this section, we empirically evaluate PCBF across a diverse set of tasks, ranging from simulated toy environments to large-scale, challenging reinforcement learning benchmarks. We compare PCBF against prior baseline models and further present ablation studies and detailed analyses.

\subsection{Experimental Setup}
\paragraph{Toy Environments.}
To rigorously validate the distributional accuracy of PCBF in a controlled setting, we first consider a suite of analytically tractable toy environments with known return structures. Unlike complex control benchmarks, these environments admit closed-form return laws, allowing us to unambiguously measure the discrepancy between the learned and ground-truth distributions. We utilize three specific environments: (i) \textbf{Solitaire Dice}; (ii) \textbf{Bernoulli MRP}; and (iii) \textbf{Discrete Monte Carlo Chain}. Detailed definitions and additional results for these environments are provided in Appendix~\ref{appendix:additional}.

\paragraph{Benchmarks.}
We evaluate PCBF on 38 offline RL tasks spanning state-based control, pixel-based visual control, and dexterous manipulation. The primary benchmark is the reward-based single-task variant of OGBench~\cite{ogbench_park2025}: four state-based domains (cube-double-play, scene-play, puzzle-4x4-play, cube-triple-play) and two pixel-based domains (visual-antmaze-teleport, visual-cube-double-play) using $64\times64\times3$ observations, each with five tasks (30 total). We add eight D4RL Adroit tasks~\cite{fu2020d4rl} (pen, door, hammer,relocate; cloned and expert), covering contact-rich dexterous manipulation. Together these span long-horizon reasoning, compositional manipulation, and semi-sparse rewards. We report mean and standard deviation over eight seeds (four for visual tasks); full environment, implementation, and hyperparameter details are in Appendices~\ref{sec:exp-data}, \ref{sec:imp-detail}, and~\ref{sec:hyper}.

\paragraph{Methods and Evaluation.}
We compare PCBF against representative offline RL baselines covering both distributional return modeling and flow-based value learning. For distributional RL baselines, we include \textbf{IQN}~\cite{dabney2018implicit} and \textbf{CODAC}~\cite{ma2021conservative}, which represent return distributions through quantile-based critics. We also compare against \textbf{Value Flows}~\cite{dong2025value}, a closely related flow-based distributional RL method that directly models the full return distribution using flow matching.

To further evaluate the role of flow-based critic parameterization, we compare with \textbf{FloQ}~\cite{agrawalla2026floq}, which uses flow matching to parameterize scalar Q-functions via iterative numerical integration rather than modeling the full return distribution. In addition, we include strong scalar value-based offline RL baselines, including \textbf{IQL}~\cite{kostrikov2021offline} and \textbf{FQL}~\cite{park2025flow}. This comparison allows us to assess whether PCBF's gains arise from flow-based critic parameterization alone or from modeling full return laws with source-consistent Bellman-coupled flows.

\subsection{Results and Discussion}
\paragraph{OGBench and D4RL.}

We evaluate PCBF against a range of strong offline RL baselines on both state-based and pixel-based benchmarks. Table~\ref{tab:aggregated} summarizes the aggregated results across 38 tasks, with full per-task scores provided in Table~\ref{tab:fullogbench_comparison} in Appendix~\ref{sec:appendix-results}. Overall, PCBF shows a selective rather than uniform advantage over existing offline RL baselines. It achieves the best or near-best aggregate performance on \textbf{cube-double-play}, \textbf{puzzle-4x4-play}, D4RL Adroit, and \textbf{visual-antmaze-teleport}, suggesting that path-coupled Bellman supervision is most beneficial when critic-side return-law fidelity and variance-controlled bootstrapping affect action ranking. On \textbf{scene-play}, PCBF remains competitive but trails the strongest flow-based baselines, while on \textbf{cube-triple-play} and \textbf{visual-cube-double-play} it underperforms Value Flows by a clear margin. These results indicate that improved distributional fitting alone does not resolve all challenges in long-horizon sparse-reward or pixel-based settings, where policy extraction, action-proposal coverage, visual representation, or suboptimal $\lambda$ selection may become the limiting factors.

\begin{table*}[t]
\centering
\caption{\textbf{Offline RL Results.} We report performance on OGBench and D4RL. Results are averaged over 8 random seeds (4 seeds for pixel-based tasks). Bold numbers denote the values within 95\% of the best performing method on each domain.}
\label{tab:aggregated}
\small
\begin{tabular}{lccccccc}
\toprule
Domain & IQN & CODAC & FloQ & FQL & IQL & Value Flows & PCBF (Ours) \\
\midrule
cube-double-play (5 tasks) & $42 \pm 8$ & $61 \pm 6$ & $47 \pm 14$ & $29 \pm 6$ & $7 \pm 1$ & $69 \pm 4$ & $\mathbf{71 \pm 5}$ \\
scene-play (5 tasks) & $40 \pm 1$ & $55 \pm 1$ & $\mathbf{58 \pm 4}$ & $56 \pm 2$ & $28 \pm 3$ & $\mathbf{59 \pm 4}$ & $54 \pm 4$ \\
puzzle-4x4-play (5 tasks) & $27 \pm 4$ & $20 \pm 18$ & $28 \pm 6$ & $17 \pm 5$ & $7 \pm 2$ & $27 \pm 4$ & $\mathbf{30 \pm 4}$ \\
cube-triple-play (5 tasks) & $6 \pm 0$ & $2 \pm 1$ & $8 \pm 3$ & $4 \pm 2$ & $1 \pm 1$ & $\mathbf{14 \pm 3}$ & $4 \pm 1$ \\
D4RL adroit (8 tasks) & $66 \pm 5$ & $69 \pm 0$ & $70 \pm 5$ & $\mathbf{71 \pm 4}$ & $70$ & $65 \pm 2$ & $\mathbf{69 \pm 2}$ \\
visual-antmaze-teleport (5 tasks) & $4 \pm 2$ & -- & -- & $5 \pm 2$ & $6 \pm 4$ & $13 \pm 4$ & $\mathbf{14 \pm 4}$ \\
visual-cube-double-play (5 tasks) & $1 \pm 0$ & -- & -- & $6 \pm 1$ & $11 \pm 6$ & $\mathbf{13 \pm 2}$ & $3 \pm 0$ \\
\bottomrule
\end{tabular}
\end{table*}

\paragraph{Toy Environments.}
To validate the distributional fidelity of PCBF, we visualize the learned probability flows on toy environments with known ground-truth return laws. Figure~\ref{fig:flowmap_combined} shows particle trajectories from the Gaussian source distribution at $t=0$ to the learned return distribution at $t=1$ for Solitaire Dice, Bernoulli, and Discrete Monte Carlo Chain environments. Across these settings, PCBF recovers the return laws: it captures the heavy-tailed discrete structure in Solitaire Dice, expands source noise over the uniform Bernoulli return support, and reconstructs the multi-modal long-horizon distributions in the Monte Carlo Chain. In all cases, the learned density closely matches the ground-truth histogram or analytic distribution. Additional state-wise flow visualizations for the Monte Carlo Chain are provided in Figure~\ref{fig:flowmap_mc} in Appendix~\ref{app:qualitative}.

\begin{figure}[tb]
\centering
\includegraphics[width=0.9\columnwidth]{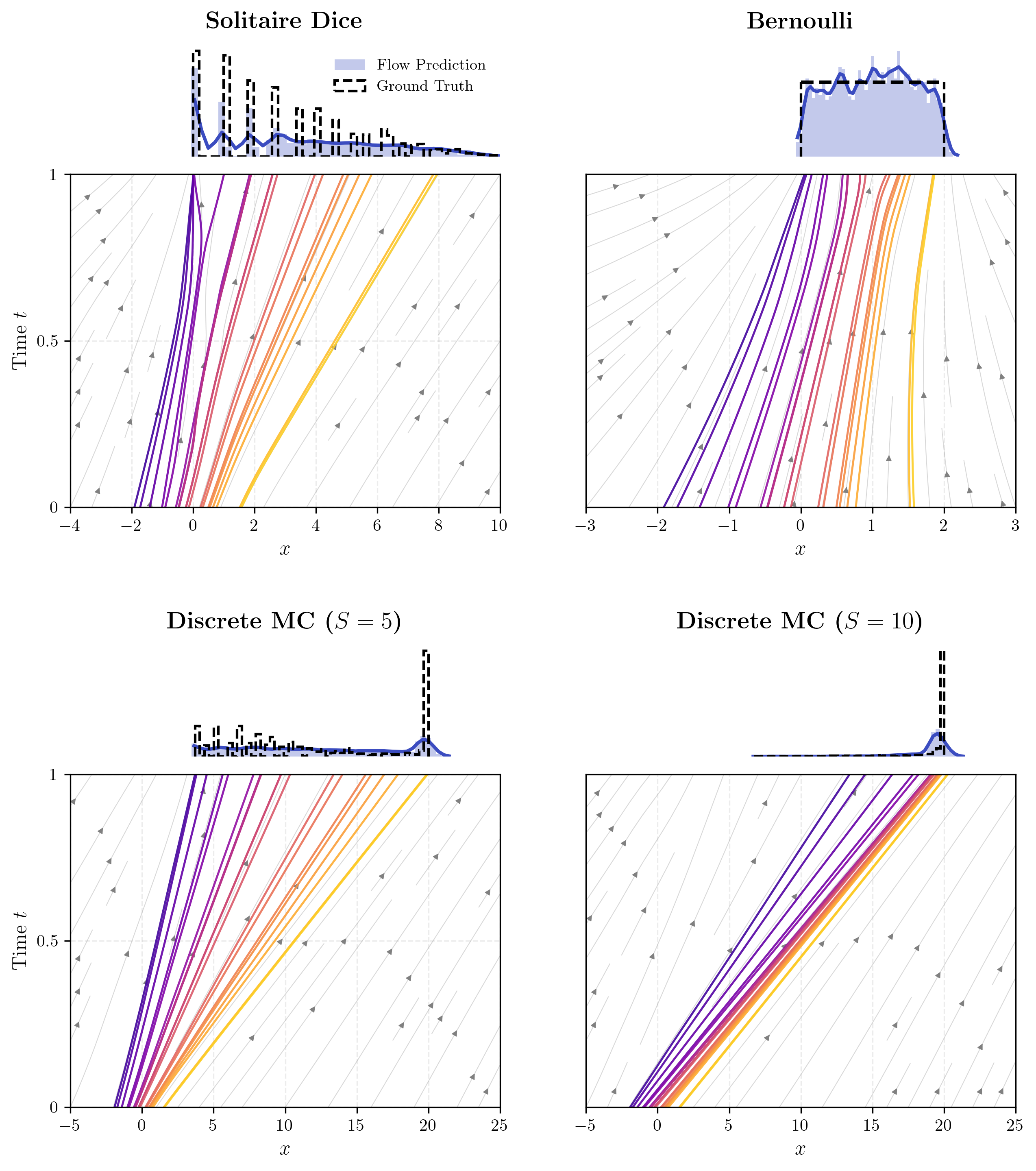}
\caption{\textbf{Learned PCBF Maps on Toy Environments.} Left Top (Solitaire); Right Top (Bernoulli); and Bottom (Discrete MC).}
\label{fig:flowmap_combined}
\end{figure}

Additionally, to rigorously assess distributional fidelity, we evaluate PCBF against Value Flows with varying dcfm coefficients on the toy environments. Figure~\ref{fig:toy_cdf_comparison} directly compares the learned return CDFs against exact Monte Carlo (MC) references, enabling precise evaluation beyond policy-level metrics. Across all environments, PCBF consistently matches the ground-truth CDFs, while Value Flows (VF) exhibits progressive distributional degradation as dcfm increases—with higher dcfm values causing systematic underestimation of return variance, particularly in long-horizon settings. Extended comparisons across additional states are provided in Appendix~\ref{app:comparative} in Figure~\ref{fig:cdf_comparison}.

\begin{figure}[tb]
\centering
\includegraphics[width=1\columnwidth]{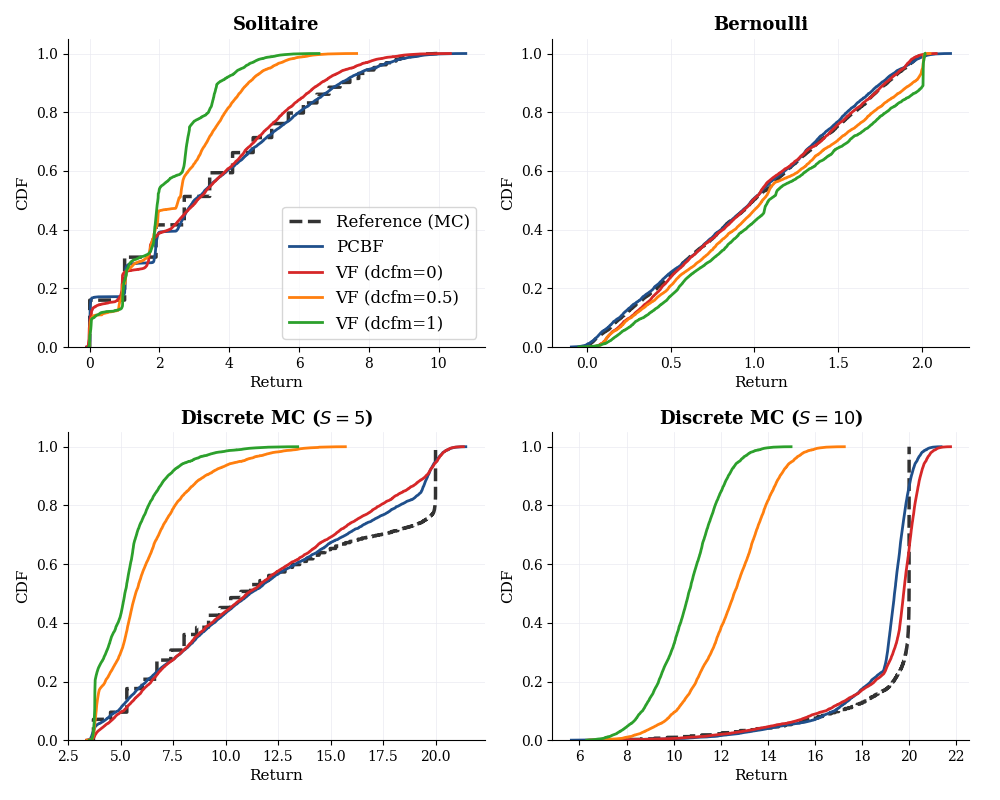}
\caption{\textbf{Distributional accuracy comparison on toy environments.} Learned return CDFs for PCBF and Value Flows (with dcfm $\in \{0, 0.5, 1\}$) compared against ground-truth references.}
\label{fig:toy_cdf_comparison}
\end{figure}

Figure~\ref{fig:toy_sen} contrasts the stability of our method against Value Flows on the Solitaire and Discrete MC tasks. We observe that increasing the consistency coefficient (\texttt{dcfm}) in Value Flows systematically degrades distributional accuracy, suggesting that strict trajectory-wide consistency conflicts with the boundary conditions required for accurate optimal transport. In contrast, PCBF's $\lambda$-target effectively decouples variance reduction from bias, consistently outperforming Value Flows across nearly all hyperparameter settings in both environments while maintaining low Wasserstein error.

\begin{figure}[tb]
\centering
\includegraphics[width=1\columnwidth]{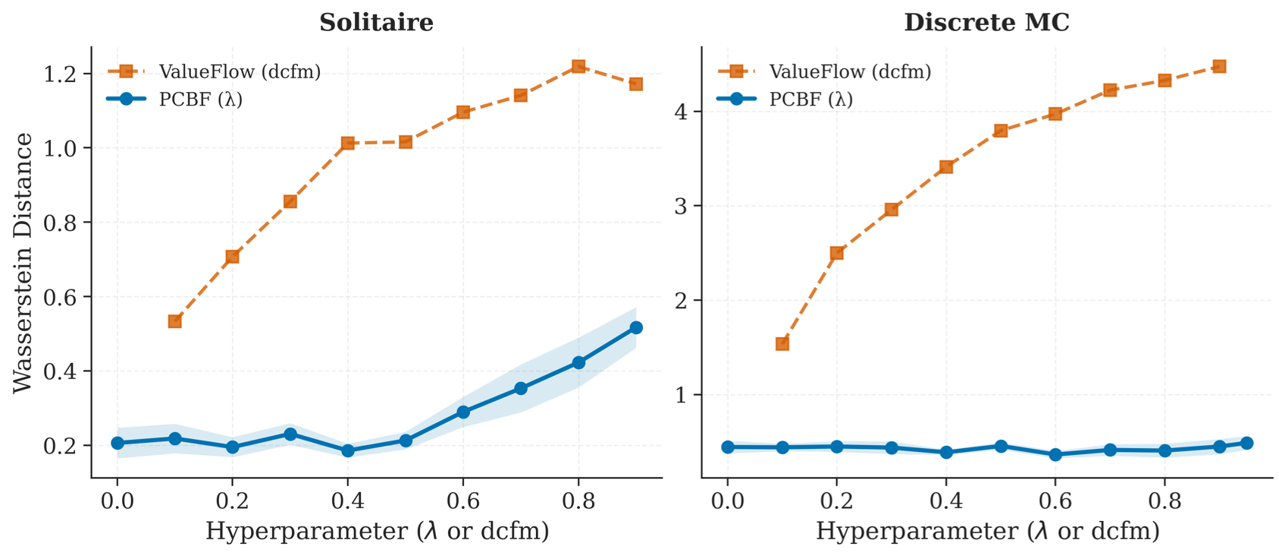}
\caption{\textbf{Hyperparameter Sensitivity Analysis (PCBF vs. Value Flows) on Solitaire and Discrete MC Environments.} A more detailed discussion can be found in Figure~\ref{fig:hyper_abla} of Appendix~\ref{app:comparative}.}
\label{fig:toy_sen}
\end{figure}

\section{Limitations}
Despite its advantages, PCBF exhibits two primary limitations.
\textbf{(i)}~Computational cost: training and evaluation require numerical integration of a velocity field; Table~\ref{tab:cost} shows PCBF is in the same ballpark as other flow-based critics but slower than scalar methods.
\textbf{(ii)}~$\lambda$ is task-dependent: we tune it once per domain, while fully adaptive online selection is left for future work.

\section{Conclusion}
We presented Path-Coupled Bellman Flows, a continuous-time distributional RL framework whose \emph{source-consistent Bellman-coupled paths} and shared-noise coupling make flow matching compatible with the Bellman endpoint while controlling target variance, with a $\lambda$-parameterized control variate trading off bias and variance. We supported it with population-velocity identification, an $L_2$ bias analysis, shared-noise contraction, and an Euler-sensitivity bound. Empirically, PCBF recovers known return laws on toy MRPs, achieves strong results on several OGBench domains and competitive D4RL performance, and shows smaller corrected pathwise Bellman residuals under coarse discretization when coupling is enabled.

\section*{Acknowledgements}
The authors thank Arizona State University Research Computing for A100 GPU resources on the Sol supercomputer~\citep{HPC:ASU23}. Boyang Xu and Hao Yan are partially funded by the National Science Foundation under Grant No.~2316654 (Collaborative Research: Multi-Agent Adaptive Data Collection for Automated Post-Disaster Rapid Damage Assessment). Any opinions, findings, and conclusions or recommendations expressed in this material are those of the author(s) and do not necessarily reflect the views of the National Science Foundation.

\section*{Impact Statement}
This paper presents methodological advancements in DRL. By improving the return distribution accuracy and stability, our work enables safer, risk-sensitive decision-making in uncertain environments like robotics or autonomous control. While advancing autonomous systems carries general societal implications regarding automation and safety, this work focuses on fundamental algorithmic improvements and does not introduce ethical or privacy concerns.
\clearpage
\bibliography{example_paper}
\bibliographystyle{icml2026}

\newpage
\appendix
\onecolumn

\section{PCBF Training Algorithm}
\label{app:pcbf-training-alg}

\begin{algorithm}[H]
  \caption{PCBF Training (offline RL)}
  \label{alg:pcbf-train}
  \begin{algorithmic}
    \STATE {\bfseries Input:} offline dataset $\mathcal{D}$, discount $\gamma\in(0,1)$, batch size $B$, control parameter $\lambda$, target update rate $\tau$
    \STATE Define $\mathrm{sg}(\cdot)$ as stop-gradient (no backprop through the argument).
    \STATE Initialize current velocity network $v_\theta$
    \STATE Initialize target network $v_{\theta^-} \leftarrow v_\theta$
    \REPEAT
      \STATE Sample minibatch $\{(S_i, A_i, R_i, S'_i, d_i)\}_{i=1}^B \sim \mathcal{D}$
      \FOR{$i=1$ {\bfseries to} $B$}
        \STATE Sample next action $A'_i \sim \pi(\cdot|S'_i)$
        \STATE Sample $X_{0,i} \sim \mathcal{N}(0,1)$ and $t_i \sim \mathrm{Unif}[0,1]$
        \STATE $X'_i \leftarrow \psi_{\theta^-}^{\,1}(X_{0,i} \mid S'_i, A'_i)$
        \STATE $\tilde\gamma_i \leftarrow \gamma(1-d_i)$ \COMMENT{effective discount; $0$ if terminal}
        \STATE $\lambda_i \leftarrow (1-d_i)\,\lambda$ \COMMENT{no successor correction at terminal transitions}
        \STATE $Z^{s'}_{t,i} \leftarrow (1-t_i)X_{0,i} + t_i X'_i$
        \STATE $Z^{s}_{t,i} \leftarrow t_i R_i + \tilde\gamma_i Z^{s'}_{t,i} + (1-t_i)(1-\tilde\gamma_i)X_{0,i}$
        \STATE $c_i \leftarrow v_{\theta^-}(t_i, Z^{s'}_{t,i} \mid S'_i, A'_i)$
        \STATE $C_i \leftarrow c_i - (X'_i - X_{0,i})$
        \STATE $Y_i \leftarrow R_i + \tilde\gamma_i X'_i - X_{0,i}$
        \STATE $u_i \leftarrow Y_i + \lambda_i C_i$
      \ENDFOR
      \STATE $\mathcal{L} \leftarrow \frac{1}{B}\sum_{i=1}^B \| v_\theta(t_i, Z^{s}_{t,i} \mid S_i, A_i) - \mathrm{sg}(u_i)\|_2^2$
      \STATE Take a gradient step on $\theta$ to minimize $\mathcal{L}$
      \STATE $\theta^- \leftarrow \tau \theta + (1-\tau)\theta^-$
    \UNTIL{convergence}
  \end{algorithmic}
\end{algorithm}

\section{Value Flows (DCFM/BCFM) and Comparison with PCBF}
\label{app:dcfm}

This appendix complements Lemma~\ref{lem:dcfm_recovery}: we summarize Value Flows~\citep{dong2025value}, state its DCFM/BCFM training objectives, explain the source-boundary tension under a literal full-$t$ distributional Bellman constraint at $t{=}0$, and contrast this with PCBF's coupled-path construction.

Value Flows learns a time-dependent vector field $v: \mathbb{R} \times [0,1] \times \mathcal{S} \times \mathcal{A} \to \mathbb{R}$ that generates a diffeomorphic flow $\psi$ transforming samples from a standard Gaussian $X_0 \sim \mathcal{N}(0,1)$ into return samples. The flow satisfies the ODE $\frac{d}{dt}\psi(X_0 | t, s, a) = v(\psi(X_0|t,s,a) | t, s, a)$ with $\psi(X_0|0,s,a) = X_0$.

To train the return vector field, Value Flows combines two losses. The first is the Distributional Conditional Flow Matching (DCFM) loss, which matches the velocity field recursively through bootstrapping from a target network. The second is the Bootstrapped Conditional Flow Matching (BCFM) loss, which provides direct supervision using TD targets and is added as a stabilizing regularizer:
\begin{equation}
\mathcal{L}_{\text{BCFM}}(v) = \mathbb{E}_{\mathcal{D}} \Big[ \big\| v(t, Z_t^{\text{TD}} | s, a) - (Z_1^{\text{TD}} - X_0) \big\|^2 \Big]
\end{equation}

where $Z_1^{\text{TD}} = R(s,a) + \gamma X'$ is the bootstrapped return (with $X'$ sampled from the target return distribution at $s', a'$), and $Z_t^{\text{TD}} = (1-t) X_0 + t Z_1^{\text{TD}}$ is the linear interpolation between the noise and the target return. The final loss is $\mathcal{L}_{\text{Value Flows}} = \mathcal{L}_{\text{DCFM}} + \lambda \mathcal{L}_{\text{BCFM}}$.
This $\lambda$ in Value Flows is a loss-balancing coefficient and should not be confused with the PCBF control-variate parameter $\lambda$.

\paragraph{Remark on DCFM scaling.}
In the Value Flows paper, the practical DCFM loss is implemented without an explicit $\gamma$ multiplier on the successor velocity,
\[
\mathcal{L}^{\mathrm{VF}}_{\mathrm{DCFM}}(v,v_k)
=
\mathbb{E}\!\left[
\left\|
v(t,Z_t\mid s,a)
-
v_k\!\left(t,Z_t'\,\middle|\, s',a'\right)
\right\|^2
\right],
\qquad
Z_t'=\frac{Z_t-R}{\gamma},
\]
with BCFM added as a stabilizing regularizer; the authors also report that naive DCFM alone produced a divergent vector field.
We record the geometry-consistent corrected analogue implied by the affine Bellman path $Z_t = R + \gamma Z_t'$, whose time derivative gives $\dot Z_t = \gamma \dot Z_t'$:
\[
\mathcal{L}_{\mathrm{DCFM}}(v,v_k)
=
\mathbb{E}\!\left[
\left\|
v(t,Z_t\mid s,a)
-
\gamma\,v_k\!\left(t,\frac{Z_t-R}{\gamma}\,\middle|\, s',a'\right)
\right\|^2
\right].
\]
These are different functional equations: the unscaled relation $v(R+\gamma z,t)=v(z,t)$ and the scaled relation $v(R+\gamma z,t)=\gamma v(z,t)$ admit different solution classes (e.g., under $R=0$, the former can force $v(\gamma z,t)=v(z,t)$ and hence constant-in-$z$ behavior, whereas the latter permits homogeneous/linear solutions).
Our degeneracy analysis below applies only to the original unscaled Value-Flows-style DCFM condition.

\subsection{Bellman Inconsistency in Value Flows}

\paragraph{Direct fixed-point incompatibility at $t=0$ (distribution-level statement).}
In this $1$-state MRP, ``Bellman at each $t$'' implies that the time-$t$ marginal density must satisfy
\[
p_v(\cdot \mid t) \stackrel{!}{=} \mathcal{T} p_v(\cdot \mid t):=\mathbb{E}_R\left[\frac{1}{\gamma} p_v\left(\left.\frac{\cdot-R}{\gamma} \right\rvert\, t\right)\right],
\]
which is exactly the density form of the distributional Bellman equation.

Because $\mathcal{T}$ has a unique fixed point distribution $p_Z^{\star}$ (the return law), the condition above implies $p_v(\cdot \mid t)=p_Z^{\star}$ for every $t$.
In particular, it would require $p_v(\cdot \mid 0)=p_Z^{\star}$. However, the flow boundary condition enforces $p_v(\cdot \mid 0)=\mathcal{N}(0,1)$.

Consequently, a literal full-$t$ distributional Bellman fixed-point constraint would be incompatible with the Gaussian source boundary.
A literal full-$t$ DCFM-style Bellman density constraint inherits this source-boundary tension; the practical Value Flows objective mitigates it by adding BCFM anchoring and uncertainty weighting, consistent with the authors' ablations showing that BCFM regularization is important.

We formalize the resulting degeneracy (constant-in-$z$ solutions under vanishing unscaled DCFM loss) as follows.
We do not claim that Value Flows as a whole is invalid; rather, our concern is that strongly overweighting the DCFM term relative to BCFM anchoring can bias the intermediate vector field toward source-incompatible solutions, whereas PCBF separates endpoint Bellman consistency from intermediate path construction.
In contrast, PCBF achieves Bellman consistency by explicitly coupling the flow paths so that the terminal distribution at $t=1$ satisfies the Bellman equation pointwise while respecting the prescribed Gaussian source at $t{=}0$, avoiding the full-$t$ fixed-point conflict discussed above.

\section{Proof Sketches}
\label{app:proofs}

\subsection{Derivation of the Density Formula and Proof of Proposition~\ref{prop:endpoints_main}}
Definition~\ref{def:P_density} is stated at the level of the pushforward law $P_{s,a,t}=\mathcal L(X_t\mid s,a)$ and does not require independence between $X_0$ and $X_1'$. The explicit kernel formula \eqref{eq:P_density_main} is the independent-source Gaussian special case obtained by change-of-variables using $u(x,x_1',r,t)$ from \eqref{eq:implied_noise} and the Jacobian $1/(1-t)$.

\textbf{Proof of Proposition~\ref{prop:endpoints_main}:}
\begin{enumerate}
\item At $t=0$, both interpolants in \eqref{eq:theory_next_path}--\eqref{eq:theory_curr_path} reduce to the base noise $X_0$, hence $P_{s,a,0}=\mathcal N(0,1)$.

\item As $t \to 1$, we have $X_t = t(R+\gamma X_1') + (1-t)X_0 \to R+\gamma X_1'$ pointwise, hence $X_t \Rightarrow R+\gamma X_1'$ in distribution. The weak convergence of $P_{s,a}(\cdot,t)$ follows, and pointwise convergence of densities holds under dominated convergence when the limit law is absolutely continuous.
\end{enumerate}

\subsection{Properties of the Posterior Operator}
The posterior operator $\mathcal{B}_{s,a}$ defined in \eqref{eq:B_def_main} satisfies:
\begin{enumerate}
\item (\textbf{Linearity}) $\mathcal{B}_{s,a}[\alpha g_1+\beta g_2](x,t)=\alpha\mathcal{B}_{s,a}[g_1](x,t)+\beta\mathcal{B}_{s,a}[g_2](x,t)$.
\item (\textbf{Tower property}) If $h=h(X_t,S,A)$ is measurable w.r.t.\ $(X_t,S,A)$, then
$\mathcal{B}_{s,a}[h\cdot g](x,t)=h(x,s,a)\,\mathcal{B}_{s,a}[g](x,t)$ and
$\mathbb{E}[\mathcal{B}_{s,a}[g](X_t,t)\mid S=s,A=a]=\mathbb{E}[g\mid S=s,A=a]$.
\item (\textbf{Bayes form, independent-source special case}) In the independent-source Gaussian case, Bayes' rule yields:
\begin{align}
P_{s,a}(x,t) &= \widetilde{\mathcal{B}}_{s,a}[1](x,t), \label{eq:P_as_tilde_B_1}\\
\mathcal{B}_{s,a}[g](x,t) &= \frac{\widetilde{\mathcal{B}}_{s,a}[g](x,t)}{\widetilde{\mathcal{B}}_{s,a}[1](x,t)} = \frac{\widetilde{\mathcal{B}}_{s,a}[g](x,t)}{P_{s,a}(x,t)}. \label{eq:B_as_normalized_tilde_B}
\end{align}
This is the continuous form of Bayes' rule: the density $P_{s,a}$ normalizes the unnormalized posterior to yield the conditional expectation $\mathcal{B}_{s,a}$.
\end{enumerate}
These follow from standard properties of conditional expectation; Eqs.~\eqref{eq:P_as_tilde_B_1}--\eqref{eq:B_as_normalized_tilde_B} additionally use the independent-source Gaussian representation \eqref{eq:P_density_main}.

\subsection{Proof of Theorem~\ref{thm:posterior_velocity_main}}
Differentiate
\(
X_t=t(R+\gamma X_1')+(1-t)X_0
\)
to obtain the pathwise velocity $(R+\gamma X_1')-X_0$.
Taking conditional expectation given $(X_t=x,S=s,A=a)$ yields \eqref{eq:vstar_main}.
When $P_{s,a,t}$ is absolutely continuous, the continuity equation follows from the weak form for smooth compactly supported test functions $\varphi$ and under the required integrability assumptions:
\[
\frac{d}{dt}\mathbb E[\varphi(X_t)\mid s,a]
=
\mathbb E[\varphi'(X_t)\,((R+\gamma X_1')-X_0)\mid s,a]
\]
and disintegration with respect to $X_t$.

\subsection{Proof of Proposition~\ref{prop:shared_noise_contraction}}
\label{app:proof-shared-noise-contraction}
For any $(s,a)$, using the common coupling cancels the reward:
\[
(\mathcal T_{\rm pc}G)_{s,a}-(\mathcal T_{\rm pc}H)_{s,a}
=
\gamma\bigl(G_{S',A'}(\xi')-H_{S',A'}(\xi')\bigr).
\]
Thus
\[
\left(
\mathbb E
\left[
|(\mathcal T_{\rm pc}G)_{s,a}-(\mathcal T_{\rm pc}H)_{s,a}|^p
\right]
\right)^{1/p}
=
\gamma
\left(
\mathbb E
\left[
|G_{S',A'}(\xi')-H_{S',A'}(\xi')|^p
\right]
\right)^{1/p}
\le
\gamma D_p(G,H).
\]
Taking the supremum over $(s,a)$ gives the first claim. For the interpolants,
\[
X_t^G-X_t^H
=
t\gamma\bigl(G_{S',A'}(\xi')-H_{S',A'}(\xi')\bigr),
\]
so the same argument gives the $t\gamma$ bound.

\subsection{Proof of Proposition~\ref{prop:lambda_learns_main}}
Use the standard $L_2$ projection lemma: the minimizer of $\mathbb{E}[(f(X)-Y)^2]$ is $f(x)=\mathbb{E}[Y\mid X=x]$. Apply with $X=X_t$ and $Y=v_\lambda$ to get $f^\star(x)=\mathcal{B}_{s,a}[v_\lambda](x,t)$, then expand $v_\lambda$ using \eqref{eq:v_lambda_main} to get \eqref{eq:lambda_bias_main}.

\subsection{Proof of Proposition~\ref{prop:bias_bound_main}}
\label{app:proof-bias-bound}
\emph{Why this matters:} the bound controls $\lambda$-induced population bias in the same $L_2$ geometry as the velocity regression, without assuming Gaussianity, unimodality, or deterministic rewards.

\paragraph{Notation.}
Recall the control-variate residual
\[
C_{\bar v}
:=
\bar v(X_t',t,S',A')-(X_1'-X_0),
\]
the population-optimal successor velocity
\[
\bar v^\star(z',t,s',a')
:=
\mathbb{E}[X_1'-X_0\mid X_t'=z',S'=s',A'=a'],
\]
and the decomposition $C_{\bar v}=E_{\mathrm{app}}+E_{\mathrm{int}}$ with
\[
E_{\mathrm{app}}
:=
\bar v(X_t',t,S',A')-\bar v^\star(X_t',t,S',A'),
\qquad
E_{\mathrm{int}}
:=
\bar v^\star(X_t',t,S',A')-(X_1'-X_0).
\]
For $h\in L_2(P_{s,a,t})$, write $\|h\|_{L_2(P_{s,a,t})}^2:=\mathbb{E}[h(X_t)^2\mid S=s,A=a]$, and define
\[
\varepsilon_{\mathrm{app}}(s,a,t)
:=
\left(
\mathbb{E}\!\left[
|E_{\mathrm{app}}|^2
\middle| S=s,A=a
\right]
\right)^{1/2},
\qquad
\varepsilon_{\mathrm{int}}(s,a,t)
:=
\left\|
\mathcal B_{s,a}[E_{\mathrm{int}}](\cdot,t)
\right\|_{L_2(P_{s,a,t})}.
\]

\paragraph{Proof.}
We prove the integrated $L_2$ bias bound for the $\lambda$-target. Recall that
\[
C_{\bar v}
=
\bar v(X_t',t,S',A')-(X_1'-X_0).
\]
Add and subtract the population-optimal successor velocity
\[
\bar v^\star(X_t',t,S',A')
=
\mathbb{E}[X_1'-X_0\mid X_t',S',A']
\]
to decompose
\[
C_{\bar v}
=
E_{\mathrm{app}}+E_{\mathrm{int}},
\]
where
\[
E_{\mathrm{app}}
:=
\bar v(X_t',t,S',A')
-
\bar v^\star(X_t',t,S',A')
\]
and
\[
E_{\mathrm{int}}
:=
\bar v^\star(X_t',t,S',A')
-
(X_1'-X_0).
\]
By construction,
\begin{equation}
\mathbb{E}[E_{\mathrm{int}}\mid X_t',S',A']=0.
\label{eq:eint_successor_mean_zero}
\end{equation}
However, the PCBF regression conditions on the current interpolant $X_t$, not on the successor interpolant $X_t'$, so generally
\[
\mathbb{E}[E_{\mathrm{int}}\mid X_t,S=s,A=a]\ne 0.
\]
This is the source of the $\lambda$-induced intrinsic bias.

From Proposition~\ref{prop:lambda_learns_main},
\[
v^{(\lambda)}_{s,a}(x,t)-v^\star_{s,a}(x,t)
=
\lambda\,\mathcal B_{s,a}[C_{\bar v}](x,t).
\]
Therefore
\[
\left\|
v^{(\lambda)}_{s,a}(\cdot,t)-v^\star_{s,a}(\cdot,t)
\right\|_{L_2(P_{s,a,t})}
=
|\lambda|\,
\left\|
\mathcal B_{s,a}[C_{\bar v}](\cdot,t)
\right\|_{L_2(P_{s,a,t})}.
\]
Using $C_{\bar v}=E_{\mathrm{app}}+E_{\mathrm{int}}$ and the triangle inequality in $L_2(P_{s,a,t})$,
\begin{align}
\left\|
\mathcal B_{s,a}[C_{\bar v}]
\right\|_{L_2(P_{s,a,t})}
&\le
\left\|
\mathcal B_{s,a}[E_{\mathrm{app}}]
\right\|_{L_2(P_{s,a,t})}
+
\left\|
\mathcal B_{s,a}[E_{\mathrm{int}}]
\right\|_{L_2(P_{s,a,t})}.
\label{eq:bias_triangle}
\end{align}

For the approximation term, conditional Jensen gives
\[
\left|
\mathcal B_{s,a}[E_{\mathrm{app}}](X_t,t)
\right|^2
=
\left|
\mathbb E[E_{\mathrm{app}}\mid X_t,S=s,A=a]
\right|^2
\le
\mathbb E[
|E_{\mathrm{app}}|^2
\mid X_t,S=s,A=a].
\]
Taking expectation over $X_t\sim P_{s,a,t}$ and applying the tower property yields
\begin{align}
\left\|
\mathcal B_{s,a}[E_{\mathrm{app}}]
\right\|_{L_2(P_{s,a,t})}^2
&\le
\mathbb E[
|E_{\mathrm{app}}|^2
\mid S=s,A=a]  \\
&=
\mathbb E\!\left[
\left|
\bar v(X_t',t,S',A')
-
\bar v^\star(X_t',t,S',A')
\right|^2
\middle| S=s,A=a
\right].
\end{align}
Thus
\[
\left\|
\mathcal B_{s,a}[E_{\mathrm{app}}]
\right\|_{L_2(P_{s,a,t})}
\le
\varepsilon_{\mathrm{app}}(s,a,t).
\]

For the intrinsic term, by definition
\[
\varepsilon_{\mathrm{int}}(s,a,t)
=
\left\|
\mathcal B_{s,a}[E_{\mathrm{int}}]
\right\|_{L_2(P_{s,a,t})}.
\]
Combining this identity with \eqref{eq:bias_triangle} gives
\[
\left\|
\mathcal B_{s,a}[C_{\bar v}]
\right\|_{L_2(P_{s,a,t})}
\le
\varepsilon_{\mathrm{app}}(s,a,t)
+
\varepsilon_{\mathrm{int}}(s,a,t).
\]
Multiplying by $|\lambda|$ proves \eqref{eq:integrated_l2_lambda_bias}.

It remains to prove the auxiliary upper bound on $\varepsilon_{\mathrm{int}}$.
Again by conditional Jensen,
\[
\left|
\mathbb E[E_{\mathrm{int}}\mid X_t,S=s,A=a]
\right|^2
\le
\mathbb E[
|E_{\mathrm{int}}|^2
\mid X_t,S=s,A=a].
\]
Taking expectation over $X_t$ and applying the tower property gives
\[
\varepsilon_{\mathrm{int}}(s,a,t)^2
\le
\mathbb E[
|E_{\mathrm{int}}|^2
\mid S=s,A=a].
\]
Moreover,
\begin{equation}
\varepsilon_{\mathrm{int}}(s,a,t)
\le
\left(
\mathbb{E}\!\left[
|E_{\mathrm{int}}|^2
\middle| S=s,A=a
\right]
\right)^{1/2}.
\label{eq:intrinsic_rms_bound}
\end{equation}

\paragraph{Interpretation.}
The bound separates $\lambda$-bias into the successor-flow approximation term $\varepsilon_{\mathrm{app}}$ and the cross-conditioning intrinsic term $\varepsilon_{\mathrm{int}}$, which arises because PCBF regresses on the current interpolant $X_t$ rather than on $(X_t',S',A')$, under which the control variate is mean-zero. Thus $\lambda>0$ is unbiased only when $\varepsilon_{\mathrm{int}}=0$. The bound scales linearly in $|\lambda|$, so $\lambda=0$ recovers the unbiased sample target while larger $\lambda$ trades bias for variance reduction.

\subsection{Pointwise Refinements for Proposition~\ref{prop:bias_bound_main}}

The integrated bound is the main result because PCBF is trained by an $L_2$ velocity regression under the current interpolant marginal. Pointwise bounds are possible, but they must condition on the local posterior law or pay an explicit density-ratio factor.

\begin{corollary}[Pointwise conditional-law bound]
\label{cor:pointwise_conditional_bias}
Let
\[
\mu_{x,t}
:=
\mathcal L(X_t',S',A'\mid X_t=x,S=s,A=a)
\]
be the conditional law of the successor-side variables given the current interpolant. For $P_{s,a,t}$-a.e. $x$,
\begin{equation}
\left|
\mathcal B_{s,a}[C_{\bar v}](x,t)
\right|
\le
\left\|
\bar v-\bar v^\star
\right\|_{L_2(\mu_{x,t})}
+
\rho_{\mathrm{int}}(x,t),
\label{eq:pointwise_conditional_l2_bound}
\end{equation}
where
\[
\rho_{\mathrm{int}}(x,t)
:=
\left(
\mathbb E[
|E_{\mathrm{int}}|^2
\mid X_t=x,S=s,A=a]
\right)^{1/2}
\]
is a root conditional second moment. Consequently,
\[
\left|
v^{(\lambda)}_{s,a}(x,t)-v^\star_{s,a}(x,t)
\right|
\le
|\lambda|
\left(
\left\|
\bar v-\bar v^\star
\right\|_{L_2(\mu_{x,t})}
+
\rho_{\mathrm{int}}(x,t)
\right).
\]
\end{corollary}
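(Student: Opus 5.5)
The plan is to repeat the integrated argument from the proof of Proposition~\ref{prop:bias_bound_main}, but pointwise in $x$, replacing the marginal $L_2(P_{s,a,t})$ norm by conditional second moments given $X_t=x$. Fix $(s,a,t)$. We use the same decomposition $C_{\bar v}=E_{\mathrm{app}}+E_{\mathrm{int}}$. Linearity of the posterior operator then gives, for $P_{s,a,t}$-a.e.\ $x$,
\[
\mathcal B_{s,a}[C_{\bar v}](x,t)=\mathcal B_{s,a}[E_{\mathrm{app}}](x,t)+\mathcal B_{s,a}[E_{\mathrm{int}}](x,t).
\]
By the scalar triangle inequality it suffices to bound each term separately.

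\emph{Approximation term.} Here $E_{\mathrm{app}}=(\bar v-\bar v^\star)(X_t',t,S',A')$ is a measurable function of the successor-side variables $(X_t',S',A')$ alone. Its conditional expectation given $X_t=x$ is therefore an integral against the regular conditional law $\mu_{x,t}$:
\[
\mathcal B_{s,a}[E_{\mathrm{app}}](x,t)=\int (\bar v-\bar v^\star)\,d\mu_{x,t}.
\]
Applying Jensen (or Cauchy--Schwarz against the probability measure $\mu_{x,t}$) bounds its absolute value by $\|\bar v-\bar v^\star\|_{L_2(\mu_{x,t})}$.

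\emph{Intrinsic term.} Conditional Jensen gives
\[
\bigl|\mathbb E[E_{\mathrm{int}}\mid X_t=x,S=s,A=a]\bigr|\le\bigl(\mathbb E[|E_{\mathrm{int}}|^2\mid X_t=x,S=s,A=a]\bigr)^{1/2}=\rho_{\mathrm{int}}(x,t).
\]
Summing the two bounds yields \eqref{eq:pointwise_conditional_l2_bound}. Proposition~\ref{prop:lambda_learns_main} gives $v^{(\lambda)}_{s,a}-v^\star_{s,a}=\lambda\,\mathcal B_{s,a}[C_{\bar v}]$, and multiplying by $|\lambda|$ gives the consequence.

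The main obstacle is measure-theoretic rather than computational. The two steps that need care are:
\begin{itemize}
\item[(i)] the existence of a regular conditional distribution $\mu_{x,t}$, so that conditional expectations of functions of $(X_t',S',A')$ can be written pointwise as integrals;
\item[(ii)] the fact that each identity holds only outside a $P_{s,a,t}$-null set.
\end{itemize}
For (i), I would take the variables to be standard Borel ($\mathbb R$-valued returns, Polish $\mathcal S,\mathcal A$), which guarantees that $\mu_{x,t}$ exists. I would assume integrability, $\mathbb E[|C_{\bar v}|^2]<\infty$, so that all quantities are finite a.e. For (ii), I would intersect the countably many exceptional null sets. No independence or Gaussianity is needed, so the argument covers coupled $(X_0,X_1')$ exactly as in the integrated bound.
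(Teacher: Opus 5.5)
Your proposal is correct and follows essentially the same proof as the paper. Both use the split $C_{\bar v}=E_{\mathrm{app}}+E_{\mathrm{int}}$ and the triangle inequality, then Jensen/Cauchy--Schwarz under $\mu_{x,t}$ for the approximation term and under the conditional law given $X_t=x$ for the intrinsic term, and finally multiply by $|\lambda|$ via Proposition~\ref{prop:lambda_learns_main}; your handling of regular conditional laws and null sets just makes precise what the paper states loosely as ``fix $x$ in the support.''
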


\begin{proof}
Fix $x$ in the support of $P_{s,a,t}$. Using $C_{\bar v}=E_{\mathrm{app}}+E_{\mathrm{int}}$,
\[
\left|
\mathcal B_{s,a}[C_{\bar v}](x,t)
\right|
\le
\left|
\mathbb E[E_{\mathrm{app}}\mid X_t=x,S=s,A=a]
\right|
+
\left|
\mathbb E[E_{\mathrm{int}}\mid X_t=x,S=s,A=a]
\right|.
\]
For the first term, conditional Jensen and the definition of $\mu_{x,t}$ give
\begin{align}
\left|
\mathbb E[E_{\mathrm{app}}\mid X_t=x,S=s,A=a]
\right|
&\le
\mathbb E[
|E_{\mathrm{app}}|
\mid X_t=x,S=s,A=a] \\
&\le
\left(
\mathbb E[
|E_{\mathrm{app}}|^2
\mid X_t=x,S=s,A=a]
\right)^{1/2} \\
&=
\left\|
\bar v-\bar v^\star
\right\|_{L_2(\mu_{x,t})}.
\end{align}
For the second term, Cauchy--Schwarz gives
\[
\left|
\mathbb E[E_{\mathrm{int}}\mid X_t=x,S=s,A=a]
\right|
\le
\left(
\mathbb E[
|E_{\mathrm{int}}|^2
\mid X_t=x,S=s,A=a]
\right)^{1/2}
=
\rho_{\mathrm{int}}(x,t).
\]
Combining the two inequalities proves the claim.
\end{proof}

\begin{corollary}[Pointwise bound with density-ratio factor]
\label{cor:pointwise_density_ratio_bias}
Let
\[
\nu_t
:=
\mathcal L(X_t',S',A'\mid S=s,A=a)
\]
and suppose that, for $P_{s,a,t}$-a.e. $x$,
\[
\mu_{x,t}
:=
\mathcal L(X_t',S',A'\mid X_t=x,S=s,A=a)
\ll
\nu_t.
\]
Let
\[
w_{x,t}
:=
\frac{d\mu_{x,t}}{d\nu_t},
\qquad
\Omega_{x,t}
:=
\|w_{x,t}\|_{L_2(\nu_t)}.
\]
Then
\begin{equation}
\left|
\mathcal B_{s,a}[C_{\bar v}](x,t)
\right|
\le
\Omega_{x,t}
\left\|
\bar v-\bar v^\star
\right\|_{L_2(\nu_t)}
+
\rho_{\mathrm{int}}(x,t),
\label{eq:pointwise_density_ratio_bound}
\end{equation}
where
\[
\rho_{\mathrm{int}}(x,t)
:=
\left(
\mathbb E[
|E_{\mathrm{int}}|^2
\mid X_t=x,S=s,A=a]
\right)^{1/2}.
\]
\end{corollary}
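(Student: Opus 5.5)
The plan is to follow the proof of Corollary~\ref{cor:pointwise_conditional_bias} and change only how the approximation term is handled. Fix $x$ in the support of $P_{s,a,t}$ with $\mu_{x,t}\ll\nu_t$. First split $C_{\bar v}=E_{\mathrm{app}}+E_{\mathrm{int}}$, exactly as in the proof of Proposition~\ref{prop:bias_bound_main}, and apply the triangle inequality:
\[
\bigl|\mathcal B_{s,a}[C_{\bar v}](x,t)\bigr|\le \bigl|\mathbb E[E_{\mathrm{app}}\mid X_t=x,S=s,A=a]\bigr|+\bigl|\mathbb E[E_{\mathrm{int}}\mid X_t=x,S=s,A=a]\bigr|.
\]
The intrinsic term is treated as in Corollary~\ref{cor:pointwise_conditional_bias}: Cauchy--Schwarz (equivalently conditional Jensen) bounds it by $\rho_{\mathrm{int}}(x,t)$. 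This part needs no change.

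For the approximation term, note that $E_{\mathrm{app}}=(\bar v-\bar v^\star)(X_t',t,S',A')$ depends only on the successor-side triple $(X_t',S',A')$, given $t$. Its conditional expectation given $X_t=x$ is therefore an integral against $\mu_{x,t}$:
\[
\mathbb E[E_{\mathrm{app}}\mid X_t=x,S=s,A=a]=\int(\bar v-\bar v^\star)\,d\mu_{x,t}.
\]
Using the absolute continuity hypothesis, change measure to rewrite this as $\int(\bar v-\bar v^\star)\,w_{x,t}\,d\nu_t$. Cauchy--Schwarz in $L_2(\nu_t)$ then gives
\[
\Bigl|\int(\bar v-\bar v^\star)\,w_{x,t}\,d\nu_t\Bigr|\le \|w_{x,t}\|_{L_2(\nu_t)}\,\|\bar v-\bar v^\star\|_{L_2(\nu_t)}=\Omega_{x,t}\,\|\bar v-\bar v^\star\|_{L_2(\nu_t)}.
\]
Adding the two bounds yields \eqref{eq:pointwise_density_ratio_bound}. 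Multiplying by $|\lambda|$ and invoking Proposition~\ref{prop:lambda_learns_main} also gives the corresponding pointwise bias bound for $v^{(\lambda)}_{s,a}-v^\star_{s,a}$.

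The only delicate step is the measure-theoretic justification that $x\mapsto\mu_{x,t}$ is a genuine regular conditional distribution. This is what makes $\mathbb E[g(X_t',S',A')\mid X_t=x]=\int g\,d\mu_{x,t}$ hold for $P_{s,a,t}$-a.e.\ $x$. I would argue that regular conditional distributions exist because everything lives on standard Borel spaces, and then state the identity holding a.e. Finiteness of $\Omega_{x,t}$ can be taken as implicit: if $\Omega_{x,t}=\infty$, the bound is trivially true.
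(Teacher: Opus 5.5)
Your proposal is correct and follows the paper's proof essentially step for step: you split $C_{\bar v}=E_{\mathrm{app}}+E_{\mathrm{int}}$, bound the intrinsic conditional mean by $\rho_{\mathrm{int}}(x,t)$ via Cauchy--Schwarz, and handle the approximation term by writing it as $\int(\bar v-\bar v^\star)\,w_{x,t}\,d\nu_t$ and applying Cauchy--Schwarz in $L_2(\nu_t)$. Your remarks on regular conditional distributions and on the case $\Omega_{x,t}=\infty$ add rigor that the paper leaves implicit.
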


\begin{proof}
Let
\[
e(z',s',a')
:=
\bar v(z',t,s',a')-\bar v^\star(z',t,s',a').
\]
For $P_{s,a,t}$-a.e. $x$, by absolute continuity,
\[
\mathbb E[E_{\mathrm{app}}\mid X_t=x,S=s,A=a]
=
\int e(y)\,d\mu_{x,t}(y)
=
\int e(y)w_{x,t}(y)\,d\nu_t(y),
\]
where $y=(z',s',a')$. By Cauchy--Schwarz under $\nu_t$,
\[
\left|
\mathbb E[E_{\mathrm{app}}\mid X_t=x,S=s,A=a]
\right|
\le
\left(
\int |e(y)|^2\,d\nu_t(y)
\right)^{1/2}
\left(
\int |w_{x,t}(y)|^2\,d\nu_t(y)
\right)^{1/2}.
\]
Therefore
\[
\left|
\mathbb E[E_{\mathrm{app}}\mid X_t=x,S=s,A=a]
\right|
\le
\Omega_{x,t}
\|\bar v-\bar v^\star\|_{L_2(\nu_t)}.
\]
The intrinsic term is bounded exactly as in Corollary~\ref{cor:pointwise_conditional_bias}:
\[
\left|
\mathbb E[E_{\mathrm{int}}\mid X_t=x,S=s,A=a]
\right|
\le
\rho_{\mathrm{int}}(x,t).
\]
Combining the two terms proves \eqref{eq:pointwise_density_ratio_bound}.
\end{proof}

\subsection{Euler Integration Error in the $\lambda$-Target}
\label{app:euler-lambda}
The training target uses a numerically integrated successor endpoint $\hat X'$ instead of an ideal $X'$.
The following proposition isolates how this perturbation propagates into the $\lambda$-target on \emph{nonterminal} transitions; at terminals $\tilde\gamma=0$ and $\lambda_i=0$ in Algorithm~\ref{alg:pcbf-train}, so successor endpoint error does not enter.

\begin{proposition}[Endpoint integration error in the $\lambda$-target]
\label{prop:euler-lambda}
Let $X'$ be the exact successor endpoint under the target flow map and $\hat X'=X'+\delta$ the endpoint returned by a numerical ODE solver.
Define $Z'_t=(1-t)X_0+tX'$ and $\hat Z'_t=(1-t)X_0+t\hat X'$.
Assume the successor velocity field $\bar v(t,z,s',a')$ is $L_t$-Lipschitz in $z$ for the relevant range.
For a nonterminal transition, define
\[
u^\lambda := R+\gamma X'-X_0+\lambda\bigl(\bar v(t,Z'_t,s',a')-(X'-X_0)\bigr),
\]
and define $\hat u^\lambda$ analogously with $(\hat X',\hat Z'_t)$.
Then
\[
\|\hat u^\lambda-u^\lambda\|
\le
\bigl(|\gamma-\lambda|+\lambda L_t t\bigr)\|\delta\|.
\]
In particular, if $0\le \lambda\le \gamma$,
\[
\|\hat u^\lambda-u^\lambda\|
\le
\bigl(\gamma-\lambda+\lambda L_t t\bigr)\|\delta\|.
\]
\end{proposition}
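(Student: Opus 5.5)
The proof is a direct difference computation followed by the triangle inequality and the Lipschitz assumption. First, subtract the two targets term by term. The reward $R$ and the base noise $X_0$ are identical in $u^\lambda$ and $\hat u^\lambda$, so they cancel. The successor endpoint enters linearly through $\gamma X'$ and through $-\lambda X'$ inside the control variate, so those contributions combine to $(\gamma-\lambda)\delta$. The only nonlinear piece is the successor velocity evaluated at the perturbed interpolant. This gives the exact identity
\[
\hat u^\lambda-u^\lambda=(\gamma-\lambda)\,\delta+\lambda\bigl(\bar v(t,\hat Z'_t,s',a')-\bar v(t,Z'_t,s',a')\bigr).
\]

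Second, record how the perturbation reaches the interpolant. Since $\hat Z'_t-Z'_t=t(\hat X'-X')=t\delta$, the $L_t$-Lipschitz assumption in $z$ gives
\[
\|\bar v(t,\hat Z'_t,s',a')-\bar v(t,Z'_t,s',a')\|\le L_t\,t\,\|\delta\|.
\]
This assumes both $Z'_t$ and $\hat Z'_t$ lie in the range where the Lipschitz constant is valid, which the hypothesis ``for the relevant range'' supplies.

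Third, apply the triangle inequality to the identity:
\[
\|\hat u^\lambda-u^\lambda\|\le|\gamma-\lambda|\,\|\delta\|+|\lambda|\,L_t\,t\,\|\delta\|.
\]
The statement writes $\lambda$ rather than $|\lambda|$ in front of $L_t t$. I would read the proposition as intended for $\lambda\ge 0$, which is the regime used throughout the paper, and state that convention explicitly; for general signs the bound holds with $|\lambda|$. The special case then follows immediately: when $0\le\lambda\le\gamma$, we have $|\gamma-\lambda|=\gamma-\lambda$.

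There is no real obstacle here. The only points needing care are bookkeeping ones. One must keep $\delta$ as the same vector in every term so that the two linear contributions genuinely combine into $(\gamma-\lambda)\delta$ rather than being bounded separately by $(\gamma+\lambda)\|\delta\|$. One must also note that the terminal case is excluded because $\tilde\gamma=\lambda_i=0$ there, so $\delta$ never enters the target.
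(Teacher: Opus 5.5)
Your proof is correct and matches the paper's argument: the paper also reduces the difference to $(\gamma-\lambda)\delta+\lambda\bigl(\bar v(t,\hat Z'_t,s',a')-\bar v(t,Z'_t,s',a')\bigr)$, applies the Lipschitz bound with $\|\hat Z'_t-Z'_t\|=t\|\delta\|$, and then uses the triangle inequality. Your remark that the coefficient should read $|\lambda|$ unless $\lambda\ge 0$ is a correct refinement that the paper leaves implicit.
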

\begin{proof}
Write $u^\lambda=(R-X_0)+(\gamma-\lambda)X'+\lambda \bar v(t,Z'_t,s',a')$ after expanding the bracket.
Then $\hat u^\lambda-u^\lambda=(\gamma-\lambda)(\hat X'-X')+\lambda(\bar v(t,\hat Z'_t)-\bar v(t,Z'_t))$.
Use $\|\hat X'-X'\|=\|\delta\|$ and Lipschitz continuity in the second term:
$\|\bar v(t,\hat Z'_t)-\bar v(t,Z'_t)\|\le L_t\|\hat Z'_t-Z'_t\|=L_t t\|\delta\|$.
Combine terms.
\end{proof}

\paragraph{Interpretation.}
At $\lambda=0$ the bound scales with $\gamma\|\delta\|$, reflecting usual sensitivity to the bootstrapped endpoint.
When $0\le\lambda\le\gamma$, choosing $\lambda$ closer to $\gamma$ replaces part of the explicit $\gamma X'$ dependence with the smoother field $\bar v$, reducing the coefficient in front of $\|\delta\|$ whenever $L_t t<1$ (and potentially increasing it in nonsmooth regimes).

\subsection{Proof of the Gaussian Special Case in Proposition~\ref{prop:bias_bound_main}}
\label{app:gaussian_bias}
We prove the linear--Gaussian instantiation stated in Proposition~\ref{prop:bias_bound_main}. Fix $t \in(0,1)$, $\gamma \in(0,1)$, deterministic reward $R=r$, successor return $Z_1' \sim \mathcal{N}(\mu,\sigma^2)$ with $\sigma>0$, and a bivariate standard noise pair $(X_0,X_0')$ (independent of $Z_1'$) with $\mathrm{Corr}(X_0,X_0')=\rho \in [-1,1]$. Form $Z_t' := tZ_1' + (1-t)X_0'$, $Z_t := t(r+\gamma Z_1') + (1-t)X_0$, let $\bar v^\star(\cdot,t)$ be the population successor velocity, and set $C := \bar v^\star(Z_t',t) - (Z_1'-X_0')$.

\paragraph{Derivation.}
Write $Z_1' = \mu + \sigma W$ with $W\sim\mathcal{N}(0,1)$, and represent $(X_0,X_0')$ as $X_0' = V'$, $X_0 = \rho V' + \sqrt{1-\rho^2}\,V$ with $V,V'\sim\mathcal{N}(0,1)$ independent, $W \perp (V,V')$. For $Z_t' = t(\mu+\sigma W) + (1-t)V'$, the Gaussian regression formula gives
$\bar v^\star(z',t) = \mathbb{E}[Z_1'-X_0' \mid Z_t' = z'] = \mu + \beta(t,\sigma)(z' - t\mu)$,
with $\beta(t,\sigma) = (t\sigma^2 - (1-t))/(t^2\sigma^2+(1-t)^2)$. Substituting $z' = Z_t'$ and subtracting $Z_1' - X_0'$ yields the linear form $C = a(t,\sigma)W + b(t,\sigma)V'$ with $a = -\sigma(1-t)/(t^2\sigma^2+(1-t)^2)$, $b = t\sigma^2/(t^2\sigma^2+(1-t)^2)$. Meanwhile $Z_t - \mathbb{E}[Z_t] = \gamma t\sigma W + (1-t)(\rho V' + \sqrt{1-\rho^2}\,V)$, and joint Gaussianity yields $\mathbb{E}[C \mid Z_t = x] = \mathrm{Cov}(C,Z_t)/\mathrm{Var}(Z_t)\,(x - \mathbb{E}[Z_t])$. A direct covariance calculation gives $\mathrm{Cov}(C,Z_t) = t(1-t)\sigma^2(\rho-\gamma)/(t^2\sigma^2+(1-t)^2)$ and $\mathrm{Var}(Z_t) = (\gamma t)^2\sigma^2 + (1-t)^2$, which yields $\kappa(t,\gamma,\sigma,\rho)$. For the $\lambda$-target $u^\lambda_t := (r+\gamma Z_1')-X_0 + \lambda C$ and $D_t := t^2\sigma^2 + (1-t)^2$,
\[
\mathrm{Var}(u^\lambda_t \mid t) = 1 + \gamma^2\sigma^2 + \frac{\sigma^2}{D_t}\bigl(\lambda^2 - 2\lambda(\gamma(1-t) + \rho t)\bigr),
\]
so $\lambda^\star(t)=\gamma(1-t)+\rho t$ minimizes the target variance.

\paragraph{Consequence (shared-noise vs.\ independent paths).}
Setting $\rho = 1$ (PCBF's shared noise) gives $\kappa = O((1-\gamma)(1-t)) \to 0$ as either $\gamma \to 1$ or $t \to 1$, so the bias contracts at the endpoints and for semi-sparse-reward, near-undiscounted regimes. Setting $\rho = 0$ (independent noise) leaves an $O(\gamma)$ multiplicative factor that does not vanish, illustrating quantitatively why path coupling is essential for the variance-bias trade-off.

\subsection{PCBF Endpoint Bellman Consistency}\label{app:pcfm}
For the PCBF path in Eq.~\eqref{eq:current-path}, the endpoint satisfies
\[
Z_1^s = R+\gamma X',
\]
pointwise. Therefore, if $X' \sim \eta_{s',a'}$, then
\[
\mathcal{L}(Z_1^s\mid s,a)=\mathcal{L}(R+\gamma X'\mid s,a)=(T^\pi\eta)_{s,a},
\]
which is exactly the distributional Bellman update at the endpoint.

\section{Implementation Details}
\label{sec:imp-detail}
\paragraph{Flow matching.}
We implement PCBF using a \textbf{path-coupled flow-matching} formulation with linear interpolation paths and uniform time sampling, as described in Sections \ref{sec:drl} and \ref{sec:method}. The return distribution is modeled via a time-dependent velocity field and trained using a mean squared error objective defined by the path-coupled Bellman targets. We numerically solve the resulting ODE using the explicit Euler method with \textbf{10 integration steps} across all experiments. For simplicity and stability, we do not use additional time embeddings for the flow time variable. Ablation results for different values of the $\lambda$ parameter, which controls the strength of the path-coupled correction, are reported in Appendix~\ref{sec:ablation}.

\paragraph{Value learning.}
PCBF learns return distributions rather than scalar value functions. We maintain a target velocity network, updated using Polyak averaging, to generate successor return samples for the path-coupled Bellman objective. The $\lambda$-parameterized target interpolates between a sample-based Bellman-consistent update and a variance-reduced correction derived from successor flow predictions. All value learning components are trained using squared regression losses on the induced velocity targets.

\paragraph{Network architectures.}
For all state-based tasks, we use multilayer perceptrons with four hidden layers of width 512 and GELU activations to parameterize the velocity fields. Layer normalization is applied to stabilize training. For pixel-based environments, we employ a lightweight convolutional encoder adapted from the IMPALA architecture to extract visual features before feeding them into the flow model.

\paragraph{Image processing.}
For pixel-based environments, observations consist of raw RGB images of size $64 \times 64 \times 3$. We apply a random-shift augmentation with probability 0.5 and use frame stacking with three consecutive frames. These preprocessing steps are shared across all pixel-based methods and are critical for stable learning in visually complex environments.

\paragraph{Training and evaluation.}
We train PCBF for 1M gradient steps on state-based OGBench tasks and 500K gradient steps on pixel-based OGBench and D4RL tasks. Evaluation is performed every 100K steps using 50 episodes. For OGBench tasks, we report the average success rate over the final three evaluation checkpoints, following the official evaluation protocol. For D4RL Adroit tasks, we report normalized returns at the final training checkpoint.

\paragraph{Policy extraction.}
\label{app:policy-extraction}
At deployment, we follow the standard candidate-action protocol used in our FQL-style codebase \cite{park2025flow}: a behavior-cloned proposal policy $\pi_\beta$ samples $K$ candidate actions $\{a_k\}_{k=1}^K$ from $\pi_\beta(\cdot\mid s)$. Each candidate is scored by the mean terminal return under the learned flow,
\begin{equation}
\label{eq:qhat-mean}
\hat Q_\theta(s,a)=\frac{1}{M}\sum_{m=1}^M \psi_\theta^{\,1}(X_{0,m}\mid s,a),
\qquad X_{0,m}\sim\mathcal{N}(0,I),
\end{equation}
and we execute $\arg\max_{a_k}\hat Q_\theta(s,a_k)$. Unless stated otherwise we use $K{=}16$ candidates, matching the ``rejection sampling candidates'' entry in Table~\ref{tab:hyperparameters}.

\paragraph{Hyperparameters.}
The $\lambda$ parameter, which controls the strength of the path-coupled correction, is the primary method-specific hyperparameter in PCBF. We tune $\lambda$ at the domain level using the task marked with $*$ and apply the selected value across all tasks within the same domain. A detailed summary of hyperparameter choices for all methods and domains is provided in Tables \ref{tab:hyperparameters} and \ref{tab:domain_hyperparameters}.

\section{Computational Cost}
\label{app:training-cost}
PCBF has similar memory and wall-clock cost to Value Flows and is slower than scalar critics; full cost details are in Table~\ref{tab:cost}.

\begin{table}[H]
\centering
\caption{Training cost on \texttt{cube-double-play-task1} (single A100, $10^6$ steps).}
\label{tab:cost}
\small
\begin{tabular}{lcc}
\toprule
Method & GPU memory (GB) & Wall-clock ($10^3$ seconds) \\
\midrule
IQL & $\sim 12$ & $1.0\times$ \\
FQL & $\sim 14$ & $1.0\times$ \\
Value Flows & $\sim 60$ & $2.4\times$ \\
PCBF (Ours) & $\sim 59.9$ & $2.5\times$ \\
\bottomrule
\end{tabular}
\end{table}

\section{Ablation Study}
\label{sec:ablation}
Figure~\ref{fig:ablations} studies the effect of $\lambda$ in PCBF across representative OGBench and D4RL tasks, with all other hyperparameters held fixed. As discussed in Section~\ref{sec:method}, $\lambda$ interpolates between unbiased sample-based targets ($\lambda=0$) and variance-reduced corrections ($\lambda>0$), and performance is highly sensitive to its choice. In several state-based OGBench tasks (e.g., scene-play-task2, cube-double-task2, puzzle-4×4-task4) moderate $\lambda$ values yield the best success rates, whereas for some visual tasks (e.g., visual-antmaze-task1) $\lambda=0$ performed best. For D4RL Adroit tasks (e.g., hammer-cloned and hammer-expert) different $\lambda$ values can improve normalized returns. This task-dependent variability motivates tuning $\lambda$ at the domain level, as reported in our main experiments.

\begin{figure}[t]
\centering
\includegraphics[width=1\columnwidth]{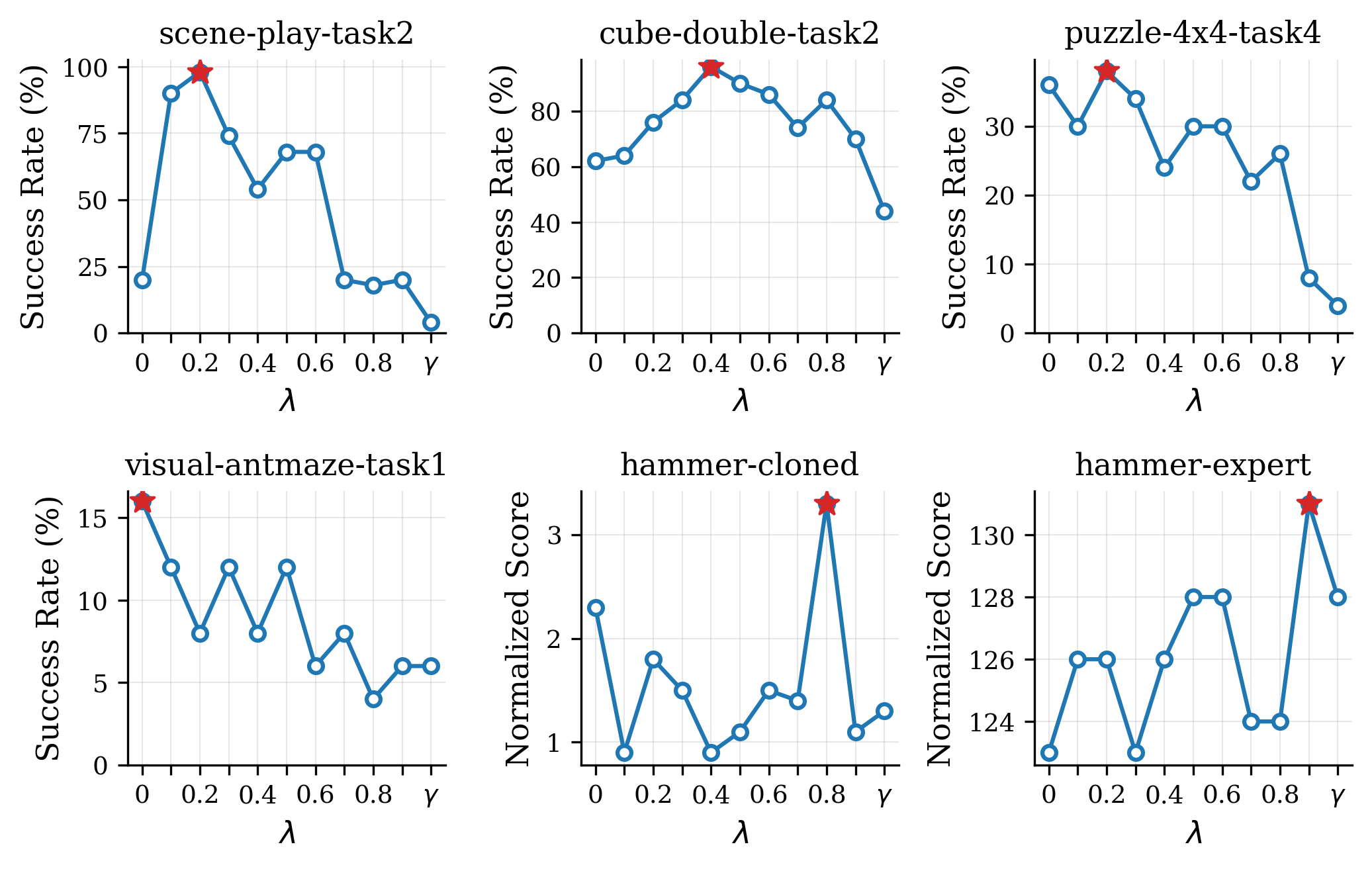}
\caption{\textbf{Ablation study of the $\lambda$ parameter in PCBF.} 
Red stars denote the best-performing $\lambda$ on representative OGBench and D4RL tasks.}
\label{fig:ablations}
\end{figure}

\section{Full Experiment Results}
\label{sec:appendix-results}
\begin{table}[H]
    \centering
    \caption{\textbf{Full Offline RL Results.} We report performance on the 38 tasks on OGBench and D4RL. The symbol $(*)$ indicates the task used for hyperparameter tuning within each domain. Results are averaged over 8 random seeds (4 seeds for pixel-based tasks). Bold numbers denote values that are within $95\%$ of the best performing method on each task.}
    \label{tab:fullogbench_comparison}
    \resizebox{\textwidth}{!}{
    \begin{tabular}{lccccccc}
        \toprule
        & \multicolumn{7}{c}{\textbf{Algorithms}} \\
        \cmidrule(lr){2-8}
        \textbf{Task} & IQN & CODAC & FloQ & FQL & IQL & Value Flows & \textbf{PCBF (Ours)} \\
        \midrule
        cube-double-play-singletask-task1-v0 & $70 \pm 14$ & $80 \pm 11$ & $50 \pm 24$ & $61 \pm 9$ & $27 \pm 5$ & $\mathbf{97 \pm 1}$ & $92 \pm 3$ \\
        cube-double-play-singletask-task2-v0 ($*$) & $24 \pm 9$ & $63 \pm 4$ & $72 \pm 15$ & $36 \pm 6$ & $1 \pm 1$ & $\mathbf{76 \pm 7}$ & $\mathbf{74 \pm 7}$ \\
        cube-double-play-singletask-task3-v0 & $25 \pm 6$ & $66 \pm 9$ & $57 \pm 14$ & $22 \pm 5$ & $0 \pm 0$ & $73 \pm 4$ & $\mathbf{81 \pm 8}$ \\
        cube-double-play-singletask-task4-v0 & $10 \pm 1$ & $13 \pm 2$ & $8 \pm 4$ & $5 \pm 2$ & $0 \pm 0$ & $\mathbf{30 \pm 5}$ & $22 \pm 5$ \\
        cube-double-play-singletask-task5-v0 & $81 \pm 8$ & $82 \pm 4$ & $50 \pm 11$ & $19 \pm 10$ & $4 \pm 3$ & $69 \pm 5$ & $\mathbf{84 \pm 3}$ \\
        \midrule
        scene-play-singletask-task1-v0 & $\mathbf{100 \pm 0}$ & $99 \pm 0$ & $100 \pm 1$ & $\mathbf{100 \pm 0}$ & $94 \pm 3$ & $99 \pm 0$ & $\mathbf{100 \pm 0}$ \\
        scene-play-singletask-task2-v0 ($*$) & $1 \pm 0$ & $85 \pm 4$ & $83 \pm 10$ & $76 \pm 9$ & $12 \pm 3$ & $\mathbf{97 \pm 1}$ & $57 \pm 13$ \\
        scene-play-singletask-task3-v0 & $94 \pm 2$ & $90 \pm 3$ & $98 \pm 2$ & $\mathbf{98 \pm 1}$ & $32 \pm 7$ & $94 \pm 2$ & $\mathbf{98 \pm 2}$ \\
        scene-play-singletask-task4-v0 & $3 \pm 1$ & $0 \pm 0$ & $9 \pm 7$ & $5 \pm 1$ & $0 \pm 1$ & $7 \pm 17$ & $\mathbf{12 \pm 3}$ \\
        scene-play-singletask-task5-v0 & $0 \pm 0$ & $0 \pm 0$ & $0 \pm 0$ & $0 \pm 0$ & $0 \pm 0$ & $0 \pm 0$ & $\mathbf{2 \pm 1}$ \\
        \midrule
        puzzle-4x4-play-singletask-task1-v0 & $\mathbf{41 \pm 2}$ & $37 \pm 32$ & $47 \pm 7$ & $34 \pm 8$ & $12 \pm 2$ & $36 \pm 3$ & $38 \pm 6$ \\
        puzzle-4x4-play-singletask-task2-v0 & $12 \pm 4$ & $10 \pm 10$ & $21 \pm 6$ & $16 \pm 5$ & $7 \pm 4$ & $\mathbf{27 \pm 5}$ & $23 \pm 5$ \\
        puzzle-4x4-play-singletask-task3-v0 & $\mathbf{45 \pm 7}$ & $33 \pm 29$ & $36 \pm 5$ & $18 \pm 5$ & $9 \pm 3$ & $30 \pm 4$ & $40 \pm 4$ \\
        puzzle-4x4-play-singletask-task4-v0 ($*$) & $23 \pm 2$ & $12 \pm 10$ & $19 \pm 5$ & $11 \pm 3$ & $5 \pm 2$ & $\mathbf{28 \pm 5}$ & $\mathbf{28 \pm 4}$ \\
        puzzle-4x4-play-singletask-task5-v0 & $16 \pm 6$ & $10 \pm 8$ & $16 \pm 7$ & $7 \pm 3$ & $4 \pm 1$ & $13 \pm 2$ & $\mathbf{23 \pm 3}$ \\
        \midrule
        cube-triple-play-singletask-task1-v0 ($*$) & $29 \pm 2$ & $9 \pm 5$ & $32 \pm 13$ & $20 \pm 6$ & $4 \pm 4$ & $\mathbf{59 \pm 12}$ & $18 \pm 4$ \\
        cube-triple-play-singletask-task2-v0 & $0 \pm 0$ & $\mathbf{1 \pm 0}$ & $0 \pm 0$ & $\mathbf{1 \pm 2}$ & $0 \pm 0$ & $0 \pm 0$ & $0 \pm 1$ \\
        cube-triple-play-singletask-task3-v0 & $1 \pm 0$ & $0 \pm 0$ & $\mathbf{7 \pm 3}$ & $0 \pm 0$ & $0 \pm 0$ & $\mathbf{7 \pm 3}$ & $1 \pm 1$ \\
        cube-triple-play-singletask-task4-v0 & $\mathbf{0 \pm 0}$ & $\mathbf{0 \pm 0}$ & $0 \pm 0$ & $\mathbf{0 \pm 0}$ & $\mathbf{0 \pm 0}$ & $\mathbf{0 \pm 0}$ & $\mathbf{0 \pm 0}$ \\
        cube-triple-play-singletask-task5-v0 & $0 \pm 0$ & $0 \pm 0$ & $0 \pm 0$ & $0 \pm 0$ & $1 \pm 1$ & $\mathbf{2 \pm 1}$ & $1 \pm 1$ \\
        \midrule
        pen-cloned-v1 & $80 \pm 11$ & $76 \pm 2$ & $72 \pm 5$ & $74 \pm 11$ & $\mathbf{83}$ & $73 \pm 5$ & $78 \pm 5$ \\
        pen-expert-v1 & $118 \pm 19$ & $136 \pm 2$ & $140 \pm 8$ & $\mathbf{142 \pm 6}$ & $128$ & $117 \pm 3$ & $131 \pm 5$ \\
        door-cloned-v1 & $0 \pm 0$ & $0 \pm 0$ & $\mathbf{3 \pm 2}$ & $2 \pm 1$ & $\mathbf{3}$ & $0 \pm 0$ & $1 \pm 0$ \\
        door-expert-v1 & $105 \pm 0$ & $104 \pm 0$ & $104 \pm 0$ & $104 \pm 1$ & $\mathbf{107}$ & $104 \pm 1$ & $\mathbf{106 \pm 0}$ \\
        hammer-cloned-v1 & $0 \pm 0$ & $6 \pm 0$ & $10 \pm 8$ & $\mathbf{11 \pm 9}$ & $2$ & $1 \pm 0$ & $2 \pm 1$ \\
        hammer-expert-v1 & $121 \pm 7$ & $126 \pm 1$ & $125 \pm 2$ & $125 \pm 3$ & $\mathbf{129}$ & $125 \pm 5$ & $\mathbf{126 \pm 2}$ \\
        relocate-cloned-v1 & $0 \pm 0$ & $0 \pm 0$ & $0 \pm 0$ & $0 \pm 0$ & $\mathbf{2}$ & $0 \pm 0$ & $0 \pm 0$ \\
        relocate-expert-v1 & $103 \pm 0$ & $103 \pm 2$ & $108 \pm 2$ & $107 \pm 1$ & $106$ & $102 \pm 2$ & $\mathbf{109 \pm 1}$ \\
        \midrule
        visual-antmaze-teleport-navigate-singletask-task1-v0 ($*$) & $2 \pm 1$ & $-$ & $-$ & $2 \pm 1$ & $5 \pm 2$ & $\mathbf{10 \pm 4}$ & $8 \pm 2$ \\
        visual-antmaze-teleport-navigate-singletask-task2-v0 & $7 \pm 3$ & $-$ & $-$ & $6 \pm 1$ & $10 \pm 2$ & $17 \pm 5$ & $\mathbf{19 \pm 4}$ \\
        visual-antmaze-teleport-navigate-singletask-task3-v0 & $6 \pm 4$ & $-$ & $-$ & $9 \pm 4$ & $7 \pm 7$ & $16 \pm 3$ & $\mathbf{18 \pm 4}$ \\
        visual-antmaze-teleport-navigate-singletask-task4-v0 & $4 \pm 2$ & $-$ & $-$ & $9 \pm 1$ & $4 \pm 6$ & $16 \pm 5$ & $\mathbf{18 \pm 5}$ \\
        visual-antmaze-teleport-navigate-singletask-task5-v0 & $2 \pm 1$ & $-$ & $-$ & $1 \pm 1$ & $2 \pm 1$ & $\mathbf{8 \pm 2}$ & $\mathbf{8 \pm 3}$ \\
        \midrule
        visual-cube-double-play-singletask-task1-v0 ($*$) & $4 \pm 1$ & $-$ & $-$ & $23 \pm 4$ & $34 \pm 23$ & $\mathbf{35 \pm 2}$ & $10 \pm 3$ \\
        visual-cube-double-play-singletask-task2-v0 & $0 \pm 0$ & $-$ & $-$ & $0 \pm 0$ & $3 \pm 1$ & $\mathbf{4 \pm 2}$ & $0 \pm 0$ \\
        visual-cube-double-play-singletask-task3-v0 & $0 \pm 0$ & $-$ & $-$ & $0 \pm 0$ & $7 \pm 4$ & $\mathbf{11 \pm 2}$ & $0 \pm 0$ \\
        visual-cube-double-play-singletask-task4-v0 & $0 \pm 0$ & $-$ & $-$ & $\mathbf{4 \pm 2}$ & $2 \pm 1$ & $2 \pm 1$ & $0 \pm 0$ \\
        visual-cube-double-play-singletask-task5-v0 & $1 \pm 1$ & $-$ & $-$ & $4 \pm 1$ & $11 \pm 2$ & $\mathbf{13 \pm 3}$ & $3 \pm 1$ \\
        \bottomrule
    \end{tabular}
    }
\end{table}

\section{Experiment Details}
We implement PCBF and all baselines using JAX \cite{jax2018github}, with implementations adapted from the FQL codebase \cite{park2025flow}.

\subsection{Environments and Datasets}
\label{sec:exp-data}
We evaluate PCBF on offline reinforcement learning benchmarks from OGBench \cite{ogbench_park2025} and D4RL \cite{fu2020d4rl}, which provide diverse tasks with long-horizon dependencies, semi-sparse rewards, and multimodal return distributions.
\paragraph{OGBench \cite{ogbench_park2025}.}
OGBench is originally designed for offline goal-conditioned reinforcement learning. Following prior work, we adopt its single-task variants (“-singletask”) to benchmark standard reward-maximizing offline RL methods. In each environment, five predefined evaluation goals are provided, yielding five corresponding single-task variants (from -singletask-task1 to -singletask-task5). Given a fixed evaluation goal, transitions in the dataset are labeled with a semi-sparse reward function that reflects task progress. For state-based and pixel-based OGBench, we use the following environments:

\begin{itemize}
  \item \textbf{State-based datasets}
  \begin{itemize}
    \item cube-double-play-v0
    \item cube-triple-play-v0
    \item scene-play-v0
    \item puzzle-4x4-play-v0
  \end{itemize}
  \item \textbf{Pixel-based datasets}
  \begin{itemize}
    \item visual-cube-double-play-v0
    \item visual-antmaze-teleport-navigate-v0
  \end{itemize}
\end{itemize}

\begin{figure}[t]
\centering
\includegraphics[width=0.7\columnwidth]{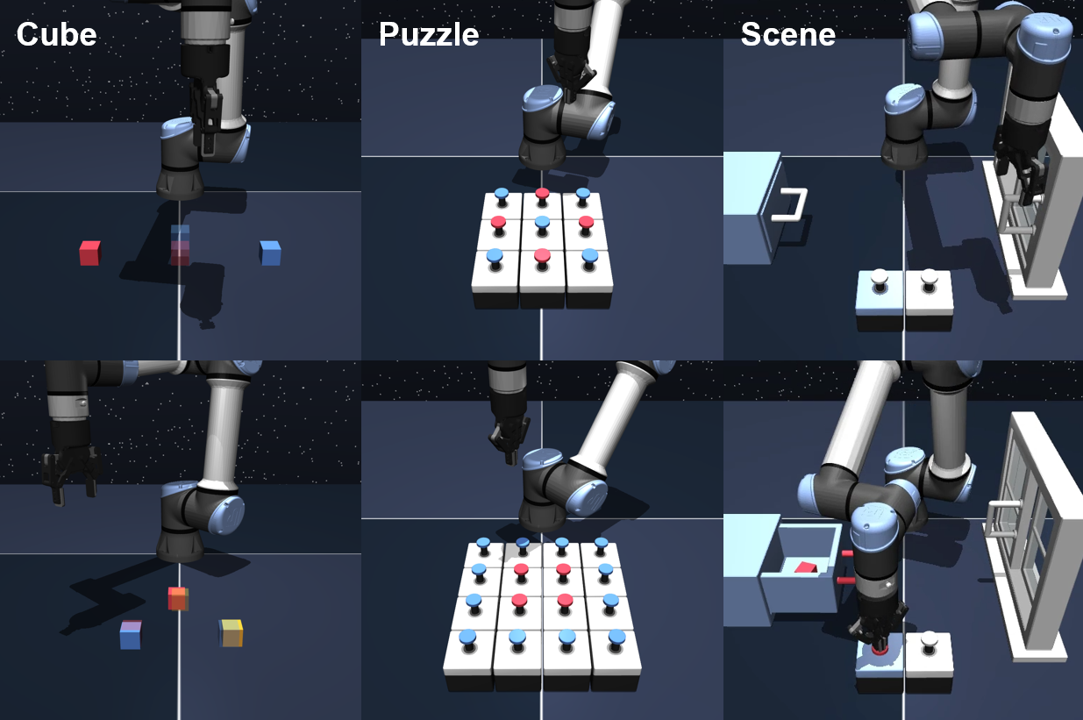}
\caption{OGBench Tasks}
\label{fig:ogbench}
\end{figure}

We choose these environments to cover a diverse range of challenges in long-horizon robotic manipulation and compositional reasoning, following the manipulation suite provided by OGBench \cite{ogbench_park2025}. The cube, scene, and puzzle environments are robot-arm manipulation tasks built on a 6-DoF robotic arm, where agents must interact with multiple objects over extended horizons. The cube environments involve multi-stage pick-and-place manipulation of colored blocks and require arranging objects into target configurations through sequences of coordinated actions. The scene-play environment further increases complexity by involving multiple interactive objects—such as drawers, locks, and movable blocks—where solving a task often requires executing a specific sequence of dependent subtasks, with the longest tasks involving up to eight atomic operations. The puzzle-4×4 environment corresponds to robotic variants of the Lights Out puzzle and is designed to test combinatorial generalization, as agents must reason over button configurations and long sequences of interactions to reach the desired goal state. All state-based manipulation tasks follow the standard play-style dataset setting in OGBench, where datasets are collected from scripted policies performing random interactions rather than goal-directed demonstrations. As a result, the datasets exhibit high suboptimality and require agents to effectively stitch partial behaviors into coherent long-horizon solutions. We evaluate these environments using their single-task variants, where each environment provides five predefined evaluation goals and rewards are semi-sparse, reflecting incremental progress toward task completion. In addition to state-based environments, we include pixel-based OGBench tasks that require learning directly from raw RGB observations of size $64 \times 64 \times 3$. These visual variants introduce partial observability and additional perceptual complexity, further challenging return modeling and policy learning. Due to the significantly higher computational cost of pixel-based training, we evaluate a representative subset of visual tasks in our experiments. For all OGBench tasks, we use the standard dataset types (play for manipulation and navigate for navigation) and report binary task success rates (in percentage), following the original OGBench evaluation protocol.

\paragraph{D4RL \cite{fu2020d4rl}.}
We further evaluate PCBF on tasks from the \textbf{D4RL} benchmark, which is widely used for studying offline reinforcement learning. In our experiments, the Adroit manipulation tasks require dexterous control with a high-dimensional 24-DoF action space. These tasks involve contact-rich interactions and long-horizon dependencies, providing a complementary evaluation setting to OGBench manipulation environments. Following the standard D4RL evaluation protocol, performance on Adroit tasks is reported using normalized returns. We use the following 8 adroit tasks:

\begin{itemize}
  \item \textbf{D4RL Adroit tasks}
  \begin{itemize}
    \item pen-cloned-v1
    \item pen-expert-v1
    \item door-cloned-v1
    \item door-expert-v1
    \item hammer-cloned-v1
    \item hammer-expert-v1
    \item relocate-cloned-v1
    \item relocate-expert-v1
  \end{itemize}
\end{itemize}

\subsection{Methods and Hyperparameters}
\label{sec:hyper}
We compare PCBF against a set of strong offline reinforcement learning baselines, covering both scalar value-based and distributional methods. All methods use the same network architectures and discount factors.

\begin{itemize}
\setlength{\itemsep}{2pt}
\item \textbf{IQL} \cite{kostrikov2021offline}:
Implicit Q-Learning is a scalar value-based offline RL method that learns conservative Q-functions via expectile regression and extracts actions within the support of the offline dataset.
\item \textbf{FQL} \cite{park2025flow}:
Flow Q-Learning is a scalar value-based method that employs a flow-based policy to maximize Q-values learned through standard temporal-difference updates, together with behavioral regularization.

\item \textbf{FloQ} \cite{agrawalla2026floq}:
Flow-matching Q-functions use a state-action-conditioned velocity field to
parameterize scalar Q-values through numerical integration. FloQ trains this
velocity field with TD-bootstrapped flow-matching targets, enabling iterative
compute scaling for value estimation rather than modeling the full return
distribution.

\item \textbf{IQN} \cite{dabney2018implicit}:
Implicit Quantile Networks approximate the return distribution by predicting quantile values at randomly sampled quantile fractions using quantile regression.
\item \textbf{CODAC} \cite{ma2021conservative}:
CODAC extends distributional RL with conservative regularization to mitigate overestimation, combining quantile-based critics with offline constraints.
\item \textbf{Value Flows} \cite{dong2025value}:
Value Flows is a flow-based distributional RL method that models the full return distribution using flow matching and learns value functions via continuous density transformations. Value Flows employs a weighted sampling procedure for time $t$ during training, whereas PCBF uses uniform time sampling.
\item \textbf{PCBF (ours)}:
Path-Coupled Bellman Flows model return distributions with flow matching using source-consistent Bellman-coupled paths, shared-noise coupling, and the $\lambda$-target of Section~\ref{sec:method}. Details are in Appendix~\ref{sec:imp-detail}; hyperparameters are in Tables~\ref{tab:hyperparameters}--\ref{tab:domain_hyperparameters}.
\end{itemize}

\begin{table}[H]
\caption{\textbf{Common hyperparameters for PCBF and baselines.}}
\label{tab:hyperparameters}
\centering
\begin{tabular}{ll}
\toprule
\textbf{Hyperparameter} & \textbf{Value} \\
\midrule
optimizer & Adam \citep{kingma2014adam} \\
batch size & 256 \\
learning rate & $3 \times 10^{-4}$ \\
MLP hidden layer sizes & (512, 512, 512, 512) \\
MLP activation function & GELU \citep{hendrycks2016gaussian} \\
use actor layer normalization & Yes \\
use value layer normalization & Yes \\
number of flow steps in the Euler method & 10 \\
number of candidates in rejection sampling & 16 \\
target network update coefficient & $5 \times 10^{-3}$ \\
number of Q ensembles & 2 \\
image encoder & small IMPALA encoder \citep{espeholt2018impala,park2025flow} \\
image augmentation method & random cropping \\
image frame stack & 3 \\
\bottomrule
\end{tabular}
\end{table}

\begin{table}[H]
\caption{\textbf{Hyperparameters for PCBF and baselines.} Hyperparameters are tuned per domain on the task marked with $*$ in the OGBench benchmarks. Entries marked with $-$ indicate not applicable or inherited settings \citep{park2025flow}. The discount factor $\gamma$ is shared across methods within each domain.}
\label{tab:domain_hyperparameters}
\centering
\small
\begin{tabular}{l|c|c|c|cc|cc|c}
\toprule
 & Discount & IQL & FQL & \multicolumn{2}{c|}{IQN} & \multicolumn{2}{c|}{CODAC} & PCBF \\
Domain or task & $\gamma$ & $\alpha$ & $\alpha$ & $\kappa$ & $\alpha$ & $\alpha$ & $\lambda$ & $\lambda$ \\
\midrule
cube-double-play & 0.995 & 0.3 & 100 & 0.95 & 300 & 1 & 3 & 0.4\\
cube-triple-play & 0.995 & 10 & 100 & 0.95 & 1000 & 3 & 0.03 & 0.995\\
puzzle-4x4-play & 0.99 & 3 & 300 & 0.95 & 1000 & 3 & 100 & 0.2\\
scene-play & 0.99 & 10 & 100 & 0.95 & 100 & 1 & 0.3 & 0.2\\
\midrule
pen-cloned & 0.99 & $-$ & 10000 & 0.8 & 10000 & 3 & 1 & 0.7\\
pen-expert & 0.99 & $-$ & 3000 & 0.8 & 10000 & 3 & 0.01 & 0.7 \\
door-cloned & 0.99 & $-$ & 30000 & 0.9 & 30000 & 10 & 0.3 & 0.5\\
door-expert & 0.99 & $-$ & 30000 & 0.9 & 10000 & 10 & 0.3 & 0.99\\
hammer-cloned & 0.99 & $-$ & 10000 & 0.8 & 10000 & 3 & 0.3 & 0.8 \\
hammer-expert & 0.99 & $-$ & 30000 & 0.8 & 10000 & 10 & 1 & 0.9\\
relocate-cloned & 0.99 & $-$ & 30000 & 0.9 & 30000 & 3 & 0.01 & 0.99\\
relocate-expert & 0.99 & $-$ & 30000 & 0.9 & 10000 & 3 & 0.1 & 0.99 \\
\midrule
visual-antmaze-teleport-navigate & 0.99 & 1 & 100 & $-$ & $-$ & 0.3 & 0.03 & 0\\
visual-cube-double-play & 0.995 & 0.3 & 100 & $-$ & $-$ & 1 & 0.3 & 0.9\\
\bottomrule
\end{tabular}
\end{table}

\section{Additional Results and Analysis}
\label{appendix:additional}
\subsection{Toy Environment Details}
To directly evaluate whether PCBF learns accurate return distributions, we study a set of analytically tractable toy environments with known return structure. In contrast to large-scale control benchmarks such as OGBench, which necessarily involve action selection and policy optimization, these toy environments admit return laws that can be derived exactly or characterized in closed form. This allows us to access distributional accuracy unambiguously, rather than inferring it indirectly from control performance.  

The environments are chosen to expose different forms of stochasticity in the Bellman recursion while remaining simple enough to permit precise interpretation. We consider:
\begin{itemize}
    \item (i) \textbf{Solitaire Dice} environment, which produces long-tailed and high-variance returns through stochastic termination while remaining free of action-dependent dynamics.
    \item (ii) \textbf{Single-state Bernoulli MRP} with a closed-form return distribution, which serves as a sanity check because its discounted return is exactly uniformly distributed over a bounded interval, enabling direct comparison between learned return distributions and the analytic ground truth.
    \item (iii) \textbf{Discrete Monte Carlo Markov chain} with finite-horizon returns induced by nearest-neighbor dynamics, which enables controlled analysis of Bellman target variability as a function of effective horizon length.
    
\end{itemize}
Together, these environments provide a principled testbed for analyzing how the $\lambda$-parameterized control variate in PCBF influences training behavior and return-distribution accuracy as horizon length and return variance increase, and for directly comparing the fidelity of learned return distributions across PCBF and baseline distributional RL methods against known ground-truth laws.

\paragraph{Solitaire Dice.}
We first consider a stochastic termination process adapted from \cite{bellemare2023distributional}, which we refer to as the \emph{Solitaire Dice} environment. The environment consists of repeatedly rolling a single fair six-sided die. If a $1$ is rolled, the episode terminates immediately; otherwise, the agent receives a reward of $1$ and the game continues. The environment contains no actions, and the transition dynamics are entirely driven by the die outcome.

Let $T$ denote the number of non-terminating rolls before the first terminating outcome $1$. The undiscounted return is given by
\[
G \;=\; \sum_{t=0}^{\infty} R_t \;=\; \underbrace{1 + 1 + \cdots + 1}_{T \text{ times}},
\]
which is an integer-valued random variable taking values in $\mathbb{N} = \{0,1,2,\dots\}$. Since each roll independently terminates the episode with probability $\tfrac{1}{6}$, the return follows a geometric distribution,
\[
\mathbb{P}(G = k) \;=\; \tfrac{1}{6}\left(\tfrac{5}{6}\right)^k, \qquad k \in \mathbb{N},
\]
corresponding to observing $k$ non-terminating outcomes before the first terminating roll.

When a discount factor $\gamma \in (0,1)$ is introduced, the return becomes
\[
G_\gamma \;=\; \sum_{t=0}^{\infty} \gamma^t R_t,
\]
Conditioned on $T = k$, this sum evaluates to the partial geometric series
\[
G_\gamma \;=\; \sum_{t=0}^{k-1} \gamma^t \;=\; \frac{1 - \gamma^{k}}{1 - \gamma},
\]
Importantly, discounting changes the \emph{support} of the return distribution but not the associated probabilities: for all $k \ge 0$,
\[
\mathbb{P}\!\left(G_\gamma = \frac{1 - \gamma^{k}}{1 - \gamma}\right)
\;=\; \tfrac{1}{6}\left(\tfrac{5}{6}\right)^k.
\]

This environment induces a highly skewed, long-tailed return distribution driven entirely by stochastic termination. As such, it provides a controlled setting for studying Bellman target variance arising from random episode lengths and for evaluating the stability of distributional learning methods under heavy-tailed returns.

\paragraph{Bernoulli.}
We next consider a single-state, single-action Markov decision process with purely stochastic rewards, adapted from the classical example in \cite{bellemare2023distributional}. The state space is $\mathcal{X}=\{x\}$ and the action space is $\mathcal{A}=\{a\}$. The initial distribution is $\xi_0=\delta_x$, and the transition kernel is deterministic,
\[
P(x \mid x,a)=1,
\]
so the system remains in the same state at all times. The reward process $\{R_t\}_{t \ge 0}$ is i.i.d. with
\[
\mathbb{P}(R_t = 1 \mid x,a) = \mathbb{P}(R_t = 0 \mid x,a) = \tfrac{1}{2},
\]

We fix the discount factor to $\gamma=\tfrac{1}{2}$. The discounted return is therefore
\[
G \;=\; \sum_{t=0}^{\infty} \gamma^t R_t
\;=\; R_0 + \tfrac{1}{2} R_1 + \tfrac{1}{4} R_2 + \cdots ,
\]

A key observation is that $G$ admits a binary expansion
\[
G \;=\; R_0 . R_1 R_2 \ldots \quad,
\]
where the expression denotes the binary expansion of a number in $[0,2]$ and each digit is an independent Bernoulli random variable. As a consequence, the support of $G$ is the interval $[0,2]$, with $0$ corresponding to the infinite sequence of zeros and $2$ corresponding to the infinite sequence of ones. Moreover, for any dyadic interval $[a,b]\subset[0,2]$ whose endpoints admit finite binary expansions, the probability mass satisfies
\[
\mathbb{P}(G \in [a,b]) = \tfrac{b-a}{2},
\]

This property uniquely characterizes the uniform distribution on $[0,2]$, implying that the return distribution is exactly
\[
G \sim \mathrm{Unif}[0,2].
\]

This environment provides a rare example of an MDP with a closed-form return distribution despite infinite-horizon bootstrapping. Because the dynamics are trivial and all stochasticity arises solely from the reward sequence, it serves as a clean sanity check for distributional Bellman consistency and for analyzing variance-reduction effects in PCBF without confounding effects from state transitions, exploration, or function approximation.

\paragraph{Discrete Monte Carlo Chain.}
We next consider a finite-state Markov reward process adapted from the discrete nearest-neighbor Markov chains studied in \citet{cheng2024surprising}. The environment consists of a one-dimensional Markov chain on a discrete state space $\mathcal{X}=\{0,1,\dots,n-1\}$ with no actions. States $0$ and $n-1$ are absorbing terminal states, and episodes are initialized from a non-terminal state in $\{1,\dots,n-2\}$.

The transition dynamics follow a nearest-neighbor structure with state-dependent probabilities. Let $p:\mathcal{X}\to\mathbb{R}_{>0}$ be defined by
\[
p(i) \;\propto\; \exp\!\left(
\frac{n-1}{4\pi}
\cos\!\left(\frac{4\pi(i-1)}{n-1}\right)
\right),
\]
and define the transition kernel for $i\in\{1,\dots,n-2\}$ by
\[
P(i,i\pm1) \;\propto\; \frac{p(i\pm1)}{p(i)+p(i\pm1)}, \qquad
P(i,i)=1-P(i,i-1)-P(i,i+1),
\]
with $P(0,0)=P(n-1,n-1)=1$ (absorbing boundaries). This construction induces a multi-well potential landscape in which local transitions remain stable while global escape times grow with $n$.

The reward function is deterministic: the agent receives a reward of $1$ at each non-terminal transition and $0$ upon entering a terminal state. The resulting return is
\[
G \;=\; \sum_{t=0}^{T-1} 1.
\]
where $T$ is the (finite) first hitting time of the terminal set. By construction, the return distribution is fully determined by the transition dynamics and the episode horizon.

This environment provides a controlled finite-horizon testbed in which Bellman targets accumulate stochasticity through repeated transitions rather than unbounded reward support. It enables direct examination of how Bellman target variance scales with effective horizon length, and facilitates precise comparison of learned return distributions across PCBF and baseline methods in a setting where the underlying return law is exactly defined.

\subsection{Internal Analysis: Variance Reduction and Stability}
\label{app:variance}

\paragraph{Variance reduction during training.}
Figure~\ref{fig:var-reduction} reports the within-run standard deviation of the Bellman velocity regression loss. As $\lambda$ increases, this variability consistently decreases. This demonstrates that the $\lambda$-parameterized control variate effectively dampens the noise in gradient updates caused by stochastic bootstrapping, validating its role as a variance-reduction mechanism.

\begin{figure}[H]
\centering
\includegraphics[width=0.8\columnwidth]{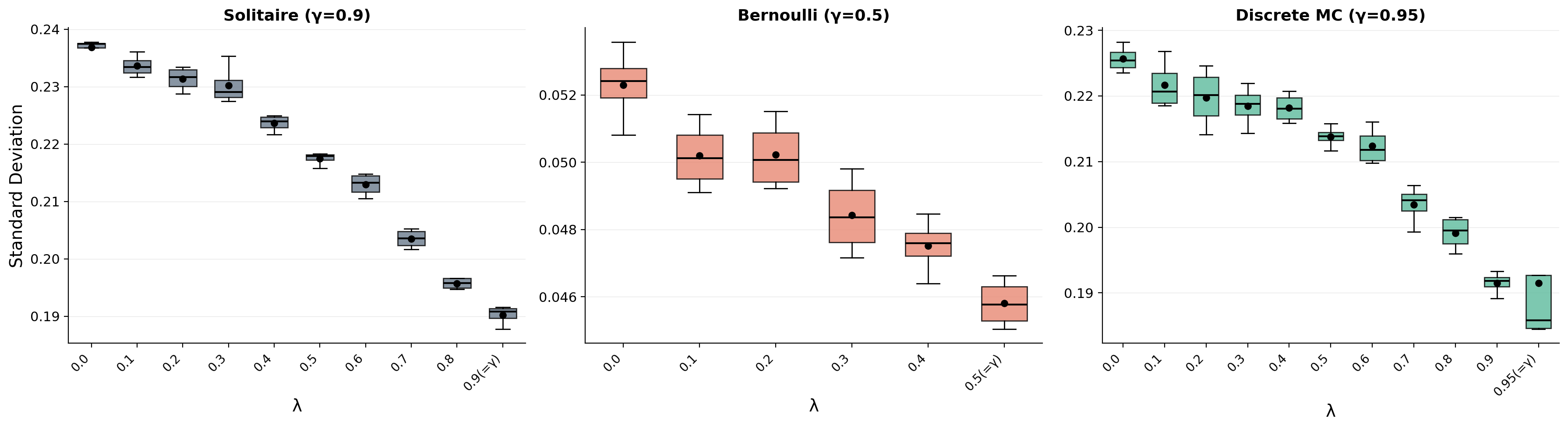}
\caption{\textbf{Variance reduction via $\lambda$-parameterized control variates.}
Larger $\lambda$ yields smoother loss trajectories (lower standard deviation), demonstrating effective variance reduction in Bellman targets.}
\label{fig:var-reduction}
\end{figure}

\paragraph{Bias--variance trade-off.}
While increasing $\lambda$ reduces optimization variance, Figure~\ref{fig:hyper_abla} illustrates the resulting trade-off in distributional accuracy. In the Bernoulli environment ($\gamma=0.5$), the Wasserstein distance remains low for $\lambda \leq 0.3$ but increases as $\lambda \to \gamma$. In the more challenging Discrete Monte Carlo environment ($\gamma=0.95$), we observe a ''sweet spot'' at moderate $\lambda$, where the benefit of variance reduction outweighs the bias, yielding the lowest approximation error.

\subsection{Comparative Analysis: PCBF vs. Value Flows}
\label{app:comparative}
We compare PCBF against Value Flows \citep{dong2025value} to highlight the impact of our proposed boundary conditions versus the Bellman consistency used in Value Flows (controlled by the \textit{dcfm} coefficient).

\paragraph{Sensitivity to Consistency Regularization.}
Figure~\ref{fig:hyper_abla} provides a direct comparison of hyperparameter sensitivity between our method (PCBF) and Value Flows. The results reveal a sharp contrast in stability:
\begin{itemize}
    \item \textbf{Value Flows Instability (Orange):} Increasing the DCFM coefficient (\texttt{dcfm}) systematically degrades performance across all tasks. This sensitivity is most extreme in the \textit{Discrete MC} environment (Right), where Wasserstein distance starts high ($\approx 1.5$) and explodes to $>4.0$ as $\texttt{dcfm} \to 1$. This is consistent with overweighting a full-$t$ DCFM-style self-consistency term relative to the BCFM anchoring that stabilizes the practical Value Flows objective.
    \item \textbf{PCBF Robustness (Blue):} In contrast, our Control Variate formulation ($\lambda$) demonstrates remarkable stability. In \textit{Discrete MC}, PCBF maintains a consistently low Wasserstein distance ($\approx 0.5$) across the entire range of $\lambda \in [0, 0.95]$, effectively ignoring the bias that plagues Value Flows. Similarly, in \textit{Solitaire} and \textit{Bernoulli}, PCBF outperforms Value Flows at nearly all coefficient magnitudes, showing that our method reduces variance without introducing significant bias into the terminal distribution.
\end{itemize}

\begin{figure}[tb]
\centering
\includegraphics[width=1\columnwidth]{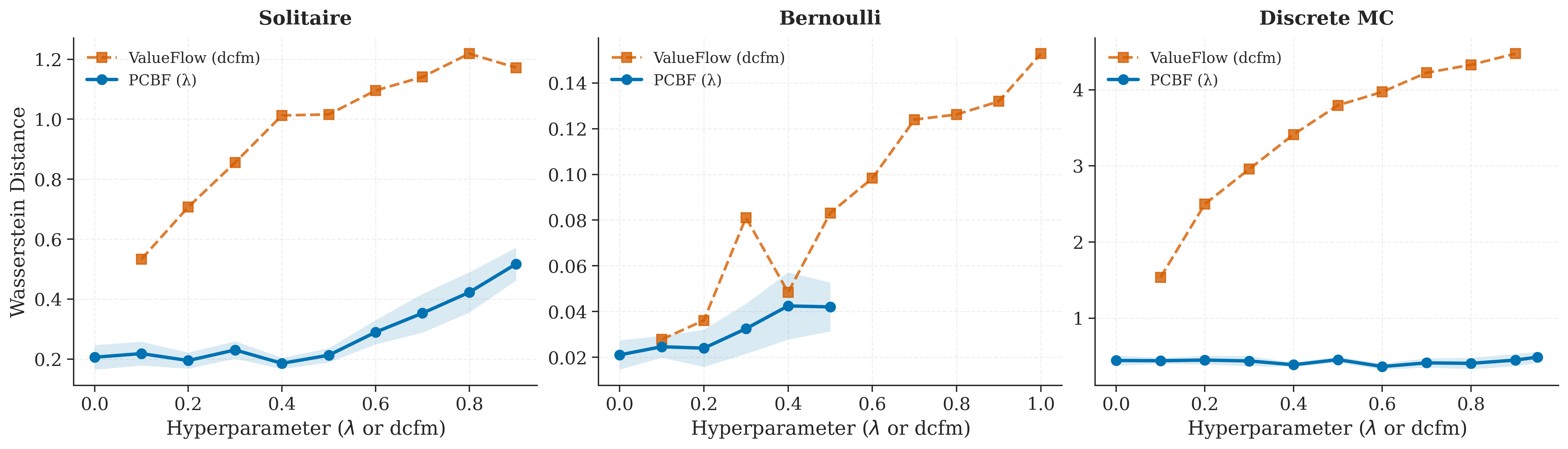}
\caption{\textbf{Hyperparameter Sensitivity Analysis (PCBF vs. Value Flows)} We compare the impact of increasing the regularization coefficient on distributional accuracy (Wasserstein Distance). \textbf{Orange (Dashed):} Increasing the Value Flows consistency coefficient (dcfm) causes rapid performance degradation, particularly in complex environments like Discrete MC. \textbf{Blue (Solid):} Our PCBF Control Variate ($\lambda$) remains robust, maintaining low Wasserstein distances and high stability across a wide range of hyperparameter values, effectively decoupling variance reduction from distributional bias.}
\label{fig:hyper_abla}
\end{figure}

\paragraph{Detailed Distributional Comparison.}
Figure~\ref{fig:cdf_comparison} presents a granular comparison of Cumulative Distribution Functions (CDFs) and Wasserstein distances. Across all six evaluation settings, PCBF consistently achieves the lowest or near-lowest Wasserstein distance to the ground truth.
\begin{itemize}
    \item \textbf{Long Horizons:} In the Discrete MC environment ($S=5$), VF with dcfm$=1$ degrades to a Wasserstein distance of $6.856$ due to bias accumulation, while PCBF maintains $0.586$.
    \item \textbf{Tail Accuracy:} The CDF plots reveal that VF systematically underestimates return variance (producing overly concentrated distributions), whereas PCBF closely tracks the reference tails.
\end{itemize}

\begin{figure}[tb]
\centering
\includegraphics[width=0.8\columnwidth]{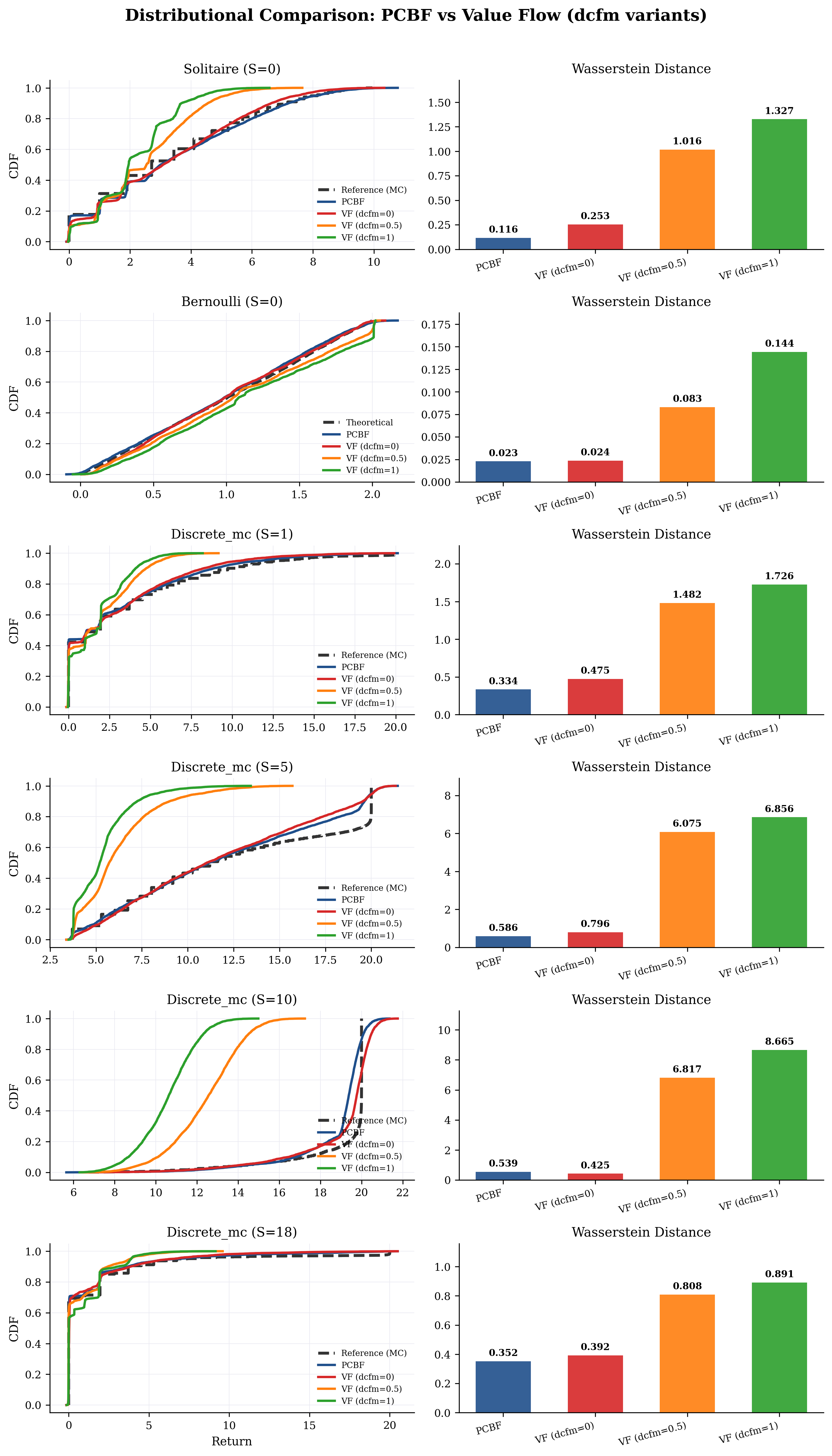}
\caption{\textbf{Full Distributional accuracy comparison.} PCBF (blue) consistently tracks the ground-truth CDF (dashed black) more accurately than Value Flows (red/green), particularly in high-variance regimes.}
\label{fig:cdf_comparison}
\end{figure}

\subsection{Visual Analysis of Transport Dynamics}
\label{app:qualitative}
Furthermore, we visually analyze the learned transport maps to verify that PCBF captures the correct distributional geometry. Figure~\ref{fig:flowmap_mc} presents the learned flows for the Discrete Monte Carlo environment across states $s=1,\dots,20$. The bottom panels illustrate the trajectories that deterministically transport the base noise to the complex, state-dependent target return distributions. The resulting flow-transported densities (top panels, blue) align tightly with ground-truth Monte Carlo histograms (black dashed) across the entire state space. The visualization confirms that the method robustly approximates the true return law $Z^\pi(x,a)$ across diverse structures of stochasticity, justifying the low Wasserstein errors observed quantitatively.

\begin{figure}[t]
\centering
\includegraphics[width=1\columnwidth]{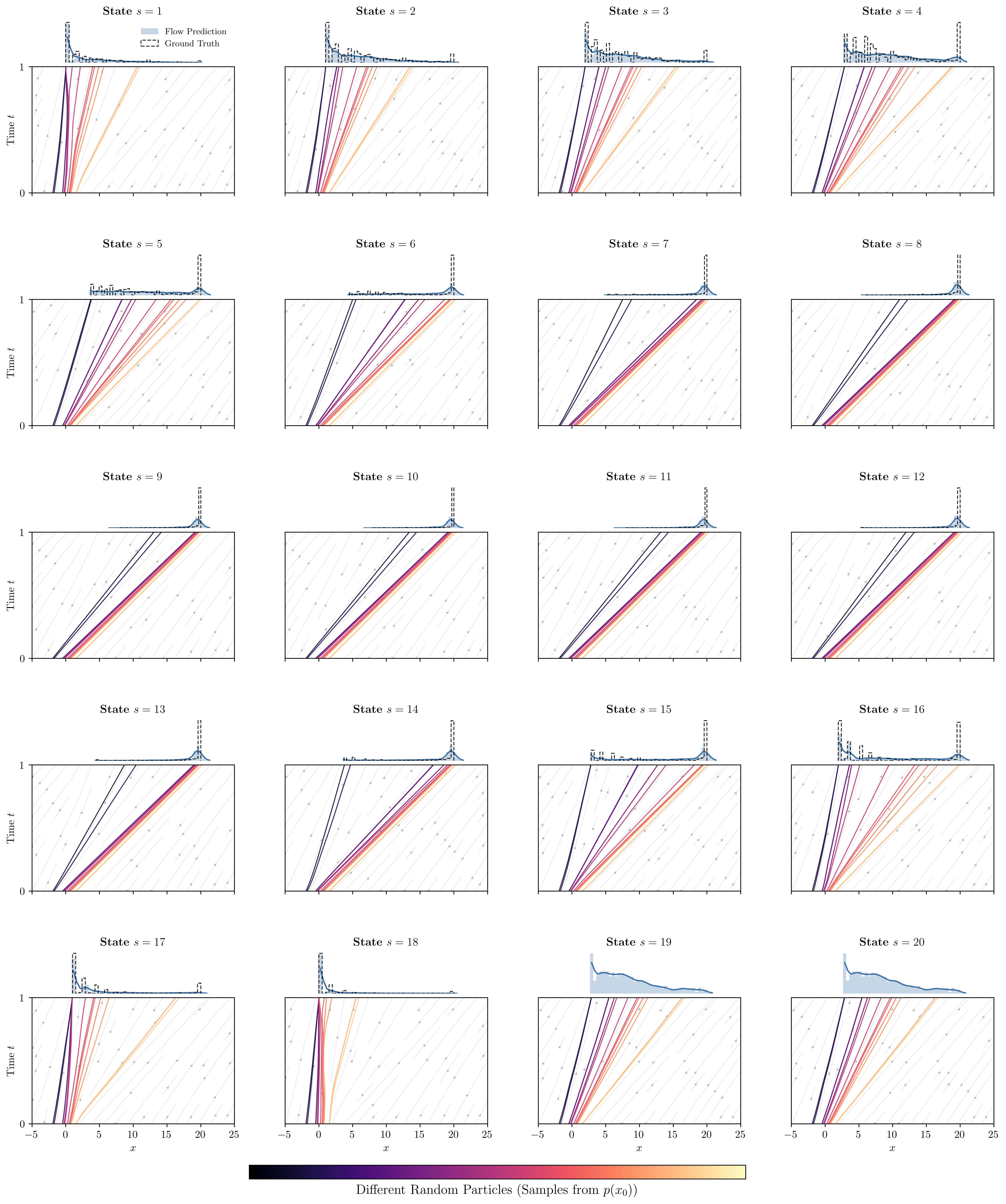}
\caption{\textbf{Distributional Flow Analysis on the Discrete MC Environment.} We visualize the learned PCBF return distributions across states $s=1$ to $s=20$. The estimated probability density of the flow-transported samples (blue filled) is compared against Ground Truth Monte Carlo rollouts(black dashed lines). Characteristic flow trajectories transporting random noise samples $(t=0)$ to the target return distribution $(t=1)$ over flow time. Trajectory colors distinguish individual particles sampled from the base distribution $p(x_0)$, illustrating how the model maps stochastic noise to specific return outcomes.}
\label{fig:flowmap_mc}
\end{figure}

\subsection{Pathwise Bellman Residual and Discretization}
\label{app:pathwise-residual}

PCBF enforces the Bellman endpoint at $t{=}1$ by construction, but training uses a finite-step Euler solver.
Let $\hat Z^s_t$ and $\hat Z^{s'}_t$ denote numerically integrated current and successor paths with $N$ function evaluations (NFE), and let $\tilde\gamma=\gamma(1-d)$ on the transition.
We report the corrected residual
\begin{equation}
\label{eq:rcorr}
r_{\mathrm{corr}}(t,N)
:=
\mathbb{E}\!\left[
\left|
\hat Z^s_t
-
\bigl(
tR+\tilde\gamma\,\hat Z^{s'}_t+(1-t)(1-\tilde\gamma)X_0
\bigr)
\right|
\right],
\end{equation}
which compares the integrated path to the \emph{closed-form} PCBF interpolation using the same integrated successor path (nonterminal transitions, $d{=}0$).
We sweep $t\in[0,1]$ and $N\in\{4,8,16,32\}$ on Solitaire Dice and compare shared-noise PCBF to an \emph{independent-noise} ablation where the successor path uses an independent $X_0'$ while holding the solver budget fixed.
Figure~\ref{fig:residual} shows that shared-noise coupling yields smaller $r_{\mathrm{corr}}$ across $(t,N)$, indicating that coupling reduces mismatch introduced by coarse discretization.

\begin{figure}[H]
\centering
\includegraphics[width=1\columnwidth]{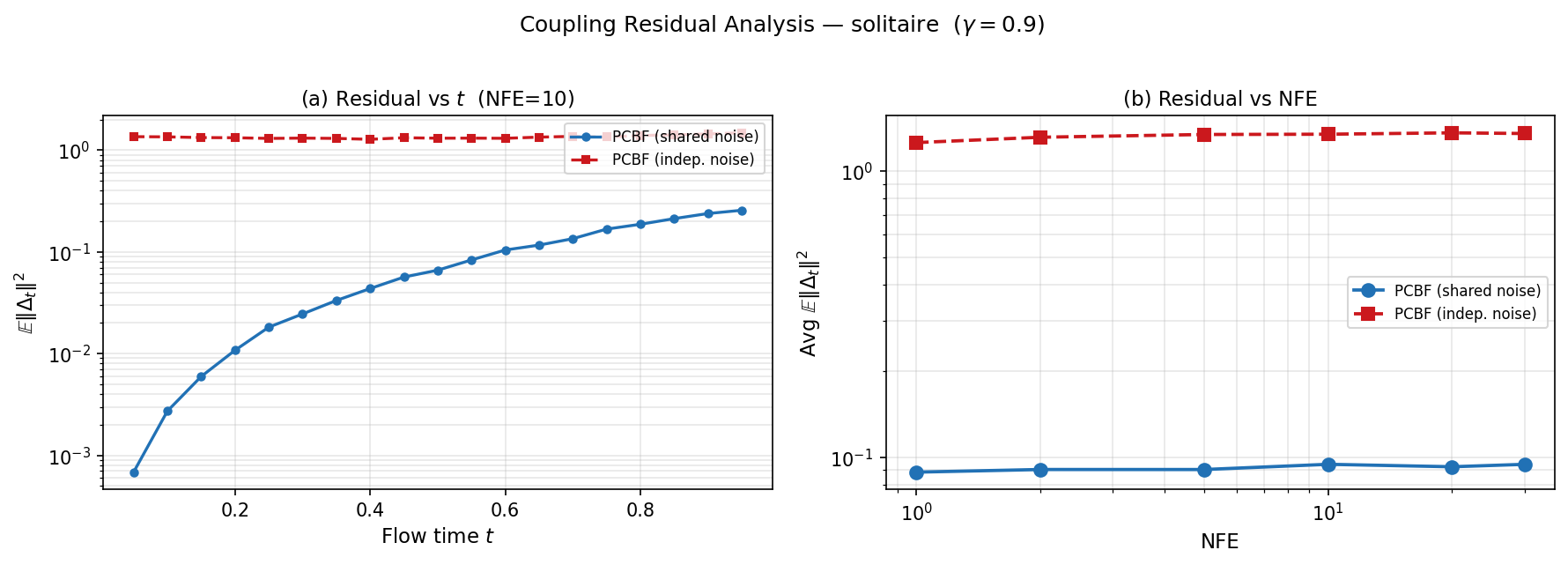}
\caption{Corrected Bellman residual $r_{\mathrm{corr}}(t,N)$ on Solitaire Dice. Shared-noise PCBF (blue) maintains lower residuals than independent-noise coupling (orange) across times and budgets.}
\label{fig:residual}
\end{figure}

\end{document}